\documentclass{article}

\PassOptionsToPackage{numbers, compress, sort}{natbib}
 \usepackage[preprint]{neurips_2026}

\usepackage[utf8]{inputenc} 
\usepackage[T1]{fontenc}    
\usepackage[dvipsnames]{xcolor}  
\usepackage{hyperref}       
\hypersetup{
    colorlinks=true,
    linkcolor=MidnightBlue,
    citecolor=MidnightBlue,
    urlcolor=MidnightBlue,
    pdfborder={0 0 0},
    pdftitle={Neural Transport Nested Sampling},
    pdfauthor={David Yallup and Will Handley},
    pdfsubject={Neural samplers for Boltzmann distributions; nested sampling; flow matching},
    pdfkeywords={nested sampling, normalizing flows, flow matching, Boltzmann generators, Lennard-Jones, partition function, molecular sampling}
}
\usepackage{url}            
\usepackage{graphicx}       
\usepackage{subcaption}     
\usepackage{booktabs}       
\usepackage{multirow}       
\usepackage{amsfonts}       
\usepackage{amsmath}        
\usepackage{amssymb}        
\usepackage{amsthm}         
\newtheorem{proposition}{Proposition}
\usepackage{nicefrac}       
\usepackage{microtype}      
\usepackage[ruled,vlined]{algorithm2e}
\usepackage{cleveref}

\newcommand{\unc}[1]{\,{\color{gray}\scriptstyle\pm\,#1}}
\crefname{algocf}{Algorithm}{Algorithms}
\Crefname{algocf}{Algorithm}{Algorithms}
\crefname{subfigure}{Fig.}{Figs.}
\Crefname{subfigure}{Figure}{Figures}

\title{Neural Transport Nested Sampling}

  \author{%
    David Yallup\thanks{dy297@cam.ac.uk} \quad Will Handley \\
    Kavli Institute for Cosmology Cambridge \\
    Institute of Astronomy, University of Cambridge, \\
    Madingley Road, Cambridge, CB3 0HA, UK
  }

\begin{document}

\maketitle

\begin{abstract}

Sampling from Boltzmann distributions of molecular systems is an inference problem that has seen significant recent developments fuelled by advances in neural density estimation. We develop a novel sampling algorithm, Neural Transport Nested Sampling (NTNS), which combines the classical strengths of nested sampling with modern neural flow-based methods. NTNS uses a flow matching velocity as the drift in a Metropolis--Hastings corrected Langevin kernel inside a nested sampling outer loop, requiring only evaluations of the target energy function and providing scalable estimation of the full partition function of high-dimensional particle systems. We benchmark NTNS on challenging molecular sampling benchmarks, scaling up to Lennard--Jones clusters of 55 interacting particles, where it reduces both interatomic distance and energy Wasserstein errors to reference MCMC by over an order of magnitude relative to the strongest neural baselines at lower wall-clock cost. To our knowledge, NTNS is also the first neural sampler to return a calibrated, temperature resolved partition function estimate at this scale, recovering the phase structure across temperature from a single run.
  
\end{abstract}

\section{Introduction}\label{sec:intro}

The success of deep learning for protein structure prediction~\citep{jumper_highly_2021} has motivated a broader programme of bringing modern machine learning to bear on the simulation of molecular systems~\citep{batatia2023foundation,pmlr-v267-fu25h}. A natural and harder cousin of structure prediction, or energy calculation, is the molecular sampling problem: rather than a single configuration, downstream tasks in chemistry and biology---free energy estimation, conformational analysis, and phase behaviour---require samples from the Boltzmann ensemble together with the partition function that normalises it. Concretely, the target is
\begin{equation}\label{eq:target}
    p_\beta(x) = \frac{\exp(-\beta E(x))\,\pi(x)}{\mathcal{Z}(\beta)}\,,\qquad \mathcal{Z}(\beta) = \int \exp(-\beta E(x))\,\pi(x)\,\mathrm{d}x\,,
\end{equation}
where $E(x)$ is an evaluable energy function over particle positions, $\pi(x)$ is a reference prior, and $\beta$ is an inverse temperature; the physical target is $\beta=1$ and reweighting to other $\beta$ exposes the phase structure of the system. Even for idealised Lennard--Jones clusters, $p_\beta$ is highly multimodal: low-energy basins are separated by large barriers, phase transitions induce abrupt changes in typical configurations, and the number of metastable structures grows rapidly with system size~\citep{noe_boltzmann_2019,kohler_equivariant_2020,klein_equivariant_2023}. These pathologies are well known to be challenging for classical Markov-chain Monte Carlo (MCMC) methods~\citep{robert_monte_2004}. This paper asks whether the scalability of recent neural diffusion samplers can be combined with the normalising constant estimates and explicit correction mechanisms of particle methods.

Recent neural samplers attack this sample-free setting by learning time-dependent drifts or scores from energy evaluations rather than from target samples. Many path-integral and diffusion-based variants have emerged along these lines in recent years~\citep{zhang_path_2022,vargas2023denoising,vargas_transport_2023,berner_optimal_2023,richter_improved_2023}, demonstrating promising scaling on challenging problems. Within this family, sampling from Boltzmann distributions provides a particularly demanding test: Iterated Denoising Energy Matching (iDEM)~\citep{akhound-sadegh_iterated_2024} was the first method to scale to larger clusters, and subsequent methods based on adjoint matching~\citep{havens_adjoint_2025,liu_adjoint_2025,blessing2026bridge} have improved its efficiency. These methods define the current scalability frontier, but all of them differentiate the energy, their accuracy remains tied to the learned finite-time dynamics, and a calibrated, temperature resolved partition function is not an intrinsic output of the sampler.

Transport-augmented particle methods take a complementary route. Classical particle methods, AIS~\citep{neal_annealed_2001} and Sequential Monte Carlo (SMC) samplers~\citep{del_moral_sequential_2006,doucet_sequential_2001}, carry importance weights and yield consistent normalising constant estimates under standard assumptions. Flow-assisted variants such as AFT~\citep{arbel_annealed_2021}, CRAFT~\citep{matthews_continual_2022}, and FAB~\citep{midgley_flow_2022} make this framework adaptive by learning transport proposals along the annealing path. Their efficiency in high-dimensional molecular systems is limited by the quality of the learned proposal~\citep{grenioux_sampling_2023}, and they have not matched recent neural diffusion samplers on molecular benchmarks. Among particle methods, nested sampling~\citep{skilling_nested_2006} is distinguished by being a direct \emph{partition function} calculator: a single \emph{athermal} trajectory through energy contours estimates the cumulative density of states, allowing $\mathcal{Z}(\beta)$ to be evaluated at any temperature by post-hoc quadrature. This property has made nested sampling one of the established methods for Lennard--Jones thermodynamics in computational chemistry, where it has been used to recover phase diagrams for atomic clusters and bulk solids without prescribing a temperature schedule~\citep{partay_nested_2014,partay_nested_2021}.


We introduce Neural Transport Nested Sampling (NTNS), a nested sampling method whose constrained prior replacement kernel is accelerated by an online-trained flow matching drift inside a Metropolis-adjusted Langevin (MALA) kernel~\citep{roberts_exponential_1996}. Transport approximation error therefore affects efficiency rather than the stationary distribution of the replacement step. NTNS is gradient-free, the flow is trained by regression on the live points and the kernel touches the target only through energy evaluations, so the force never enters the method and no clipping or smoothing of the target is needed. On LJ-55, the highest-dimensional standard benchmark in this suite, NTNS achieves the closest agreement with reference MCMC of any neural sampler we tested, reducing both interatomic distance and energy Wasserstein errors by over an order of magnitude relative to the strongest prior neural baseline, at substantially lower wall-clock training cost (\cref{tab:w2_metrics,tab:timing}). On the smaller benchmarks (DW-4, LJ-13) NTNS is competitive with the best baselines on every metric and best on most. Unlike diffusion-style samplers, the same run also returns a temperature resolved nested sampling estimate of $\mathcal{Z}(\beta)$, enabling post-hoc reweighting and recovery of the LJ-55 phase diagram without additional simulations (\cref{fig:phase_diagram}).

We summarize our key contributions as follows:
\begin{enumerate}
    \renewcommand{\labelenumi}{(\roman{enumi})}
    \item We introduce Neural Transport Nested Sampling, a neural particle method that learns the geometry of each constrained prior online, addressing the central mutation bottleneck in nested sampling (\cref{sec:flow_mala}).
    \item We introduce the first sampling method to use a learned flow matching velocity directly as a local proposal, rather than integrating the induced flow. Metropolis--Hastings correction requires neither target-energy gradients nor flow-density evaluation, so learning error affects efficiency rather than the invariant distribution of the fixed kernel (\cref{prop:correctness}).
    \item We demonstrate consistent convergence on Lennard--Jones cluster benchmarks, comfortably outperforming representative state-of-the-art neural samplers. A matched classical nested sampling control isolates the learned proposal and confirms its benefit under the same outer construction (\cref{sec:experiments,sec:nss_control}).
    \item We show that a single NTNS run returns a calibrated, temperature resolved partition function and supports post-hoc sampling across temperatures, providing the first such neural sampling result at scale (\cref{fig:phase_diagram,sec:phase_appendix}).
\end{enumerate}

\section{Background}\label{sec:background}

NTNS sits at the intersection of two literatures that have largely developed separately: particle methods for estimating normalising constants, and neural transport methods for scalable sample generation. We work in the sample-free Boltzmann setting of \cref{eq:target}: the energy $E(x)$ can be evaluated, but samples from $p_\beta$ and the constant $\mathcal{Z}(\beta)$ are unavailable. When samples from the target \emph{are} available, this is a \emph{generative modelling} problem addressable by density estimation~\citep{dinh_density_2017,rezende_variational_2015,papamakarios_normalizing_2021,ho_denoising_2020,song_score-based_2020}. We cover the necessary background on nested sampling (\cref{sec:ns_background}), flow matching (\cref{sec:cfm}), and neural sampler baselines (\cref{sec:neural_baselines}).

\subsection{Nested Sampling}\label{sec:ns_background}

Nested sampling~\citep{skilling_nested_2006} is a particle method, widely used across the physical sciences~\citep{ashton_nested_2022} and with a long history of application to Lennard--Jones systems~\citep{partay_nested_2014,partay_nested_2021}, that computes the partition function by transforming the multi-dimensional integral into a one-dimensional integral over the \emph{prior volume}
\begin{equation}\label{eq:evidence}
    \mathcal{Z}(\beta) = \int e^{-\beta E(x)}\pi(x)\,\mathrm{d}x = \int_0^1 e^{-\beta E^*(X)}\,\mathrm{d}X, \qquad X(E^*) = \int_{E(x) < E^*} \pi(x)\,\mathrm{d}x,
\end{equation}
where $X(E^*)$ is the fraction of prior mass at energies below $E^*$, and $E^*(X)$ is its inverse. The algorithm maintains a set of $m$ live points $\{x_1, \ldots, x_m\}$ drawn from the prior. Indexing dead points $i = 1, 2, \ldots$ in order of removal, with energy $E_i$ and prior volume $X_i$ estimated from the dynamic live count order statistics~\citep{fowlie_nested_2021,yallup_nested_2026}, the live points at step $i$ are samples from the \emph{constrained prior},
\begin{equation}\label{eq:constrained_prior}
    \pi_i^*(x) = \frac{\mathbf{1}\{E(x) < E_i\}\,\pi(x)}{X_i},
\end{equation}
and the dead point trajectory yields a quadrature for the temperature dependent partition function
\begin{align}\label{eq:weights}
    \mathcal{Z}(\beta) \;\approx\; \sum_i w_i(\beta)\,, \qquad w_i(\beta) = e^{-\beta E_i}\,(X_{i-1} - X_i)\,.
\end{align}
In practice we batch $k$ removals before replenishing the live set, recovering the original Skilling algorithm at $k=1$. Crucially, $\beta$ enters only through the weights $w_i(\beta)$; the trajectory $\{(E_i, X_i)\}$ is temperature-independent, so a single run yields $\mathcal{Z}(\beta)$ and reweighted samples at every temperature.

The critical step is \cref{eq:constrained_prior}: sampling from the prior subject to a hard energy constraint. As the algorithm progresses the constrained region shrinks and can develop complex geometry (multiple modes, curved ridges, thin shells), making this step increasingly expensive. The hard boundary set by the energy threshold is also a unique challenge for parameterised neural surrogates, which produce smooth approximations.

Nested sampling is a particle method closely related to SMC~\citep{del_moral_sequential_2006,doucet_sequential_2001}. In the standard tempered SMC construction considered here, the path is indexed by an annealing temperature; nested sampling instead uses an adaptive, sorted energy schedule. Rather than softening the target with $\beta$, it maintains a hard sequence of energy thresholds $E^*$ determined online from the worst current particles, making nested sampling closest to \emph{adaptive} SMC~\citep{fearnhead_adaptive_2010}. Reweighting deletes those particles outright instead of multiplying their weights by an annealing factor, and the constrained prior replacement step plays the role of the SMC mutation kernel; see \Cref{fig:pt:paths} and \cref{sec:ns_vs_smc}.

\subsection{Flow Matching and Density Estimation}\label{sec:cfm}

Flow matching~\citep{lipman_flow_2022,tong_improving_2023} provides a simulation-free training objective for continuous normalising flows (CNFs)~\citep{chen_neural_2018}. Given samples $\mathbf{x}_0 \sim p_0$ from a base distribution and $\mathbf{x}_1 \sim q$ from a target (we use the independent coupling), the conditional flow matching (CFM) loss trains a velocity field $v_\phi$ by regressing onto conditional vector fields:
\begin{equation}\label{eq:cfm_loss}
    \mathcal{L}_\text{CFM}(\phi) = \mathbb{E}_{t \sim U[0,1],\, \mathbf{x}_1 \sim q,\, \mathbf{x}_0 \sim p_0} \left\| v_\phi(\mathbf{x}_t, t) - (\mathbf{x}_1 - \mathbf{x}_0) \right\|^2,
\end{equation}
where $\mathbf{x}_t = t\mathbf{x}_1 + (1-t)\mathbf{x}_0$ is the linear interpolant and $t \in [0,1]$ is the interpolation time, sampled uniformly during training (per \cref{eq:cfm_loss}).
This loss is equivalent to the marginal flow matching objective up to a constant and does not require simulating the ordinary differential equation (ODE) during training. In standard density estimation $q$ is the empirical data distribution; in NTNS $q$ is the current constrained prior $\pi_i^*$, represented by the live points.

The learned velocity field can be used in two ways~\citep{liu_flow_2022,albergo_stochastic_2025}. Deterministically, by integrating the ODE
\begin{equation}\label{eq:ode}
    \frac{\mathrm{d}\mathbf{x}}{\mathrm{d}t} = v_\phi(\mathbf{x}, t), \qquad \mathbf{x}(0) = \mathbf{x}_0 \sim p_0,
\end{equation}
transports base samples to target samples and gives a tractable density through the instantaneous change-of-variables formula
\begin{equation}\label{eq:change_of_vars}
    \log q_\phi(\mathbf{x}) = \log p_0(\mathbf{x}_0) - \int_0^1 \nabla \cdot v_\phi(\mathbf{x}(t), t)\,\mathrm{d}t,
\end{equation}
where $\mathbf{x}_0$ is obtained by integrating the flow backwards in time from $t=1$ to $t=0$ starting from $\mathbf{x}$. In practice this requires tens to hundreds of network evaluations together with divergence estimation. Through the stochastic interpolant and probability flow formulations, the flow matching velocity is connected to score-based and diffusion samplers~\citep{sohl-dickstein_deep_2015,ho_denoising_2020,song_score-based_2020}, as utilised in iDEM~\citep{akhound-sadegh_iterated_2024}; sampling via the corresponding reverse time stochastic differential equation (SDE) additionally requires a learned score in place of (or alongside) the velocity.

The constrained prior $\pi_i^*$ (\cref{eq:constrained_prior}) creates a particular challenge for flow matching: the target has hard boundaries set by the energy threshold, whereas neural networks produce smooth approximations (\Cref{fig:pt:compression}). We address this mismatch by using the flow not as a direct sampler but as a proposal inside a Metropolis--Hastings (MH) correction~\citep{metropolis_equation_1953,hastings_monte_1970} (\cref{sec:method}).

\begin{figure}
    \centering
    \begin{subfigure}[t]{0.64\columnwidth}
        \centering
        \includegraphics[width=\linewidth]{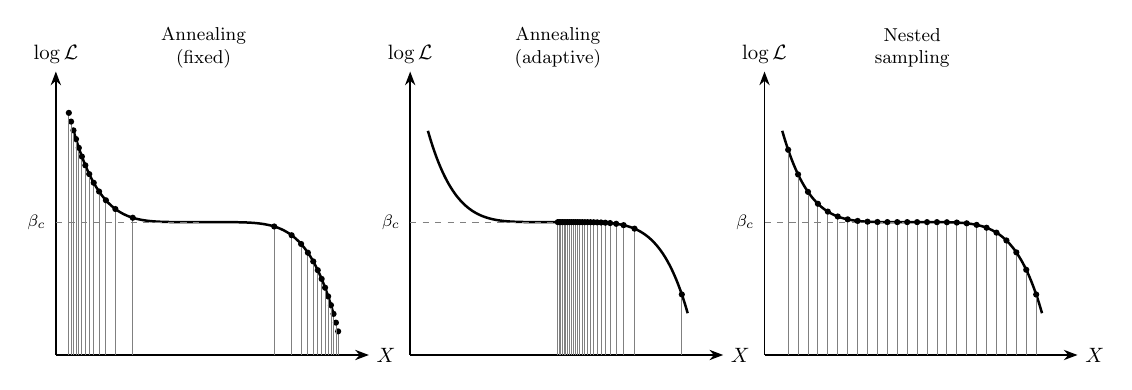}
        \caption{Annealing vs.\ nested sampling paths through energy landscape.}\label{fig:pt:paths}
    \end{subfigure}\hfill
    \begin{subfigure}[t]{0.34\columnwidth}
        \centering
        \includegraphics[width=\linewidth]{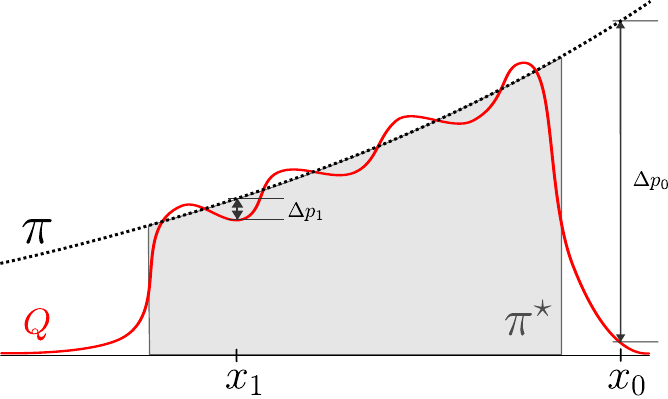}
        \caption{Nested sampling constrained prior target.}\label{fig:pt:compression}
    \end{subfigure}
    \caption{Nested sampling as adaptive neural transport. In \cref{fig:pt:paths}, annealed SMC follows a fixed temperature path and suffers weight collapse at phase transitions; nested sampling compresses the prior through energy contours determined online by the live points, giving $\mathcal{Z}(\beta)$ at every temperature simultaneously ($\log\mathcal{L} = -E$). In \cref{fig:pt:compression}, at each NS iteration the live points define the constrained prior $\pi_i^*$ (\cref{eq:constrained_prior}); a flow matching model provides a smooth proposal $Q_\phi$, used only inside an MH corrected kernel.}\label{fig:pt}
\end{figure}

\subsection{Related Prior Work}\label{sec:neural_baselines}

\textit{Neural samplers.} Diffusion and control-based samplers learn a neural parameterised drift that carries a tractable base distribution to the Boltzmann target~\citep{zhang_path_2022,vargas2023denoising,berner_optimal_2023,richter_improved_2023,vargas_transport_2023,geffner_langevin_2023,blessing_underdamped_2024,albergo_nets_2025,dern_energy-weighted_2026,pmlr-v235-blessing24a}. Relevant to the Lennard--Jones cluster setting, two patterns are worth highlighting: regressing a score network onto a Monte Carlo estimate of the score~\citep{akhound-sadegh_iterated_2024}, and regressing the drift onto the energy gradient at trajectory endpoints by adjoint matching~\citep{havens_adjoint_2025,liu_adjoint_2025,blessing2026bridge}. Without a correction step, sample quality is set by training error, and the normalising constant, where reported, is estimated by importance sampling the trained model, which carries no correction guarantee either. Other methods wrap the learned diffusion in a particle system with resampling~\citep{phillips_particle_2024,chen_sequential_2025}, or reweighting of non-equilibrium trajectories~\citep{albergo_nets_2025}. NTNS differs from this setting as the learned dynamics are never simulated, but used as a fixed guiding force to mutate particles efficiently, with only an energy-based Metropolis correction.

\textit{Particle methods with learned proposals.} AIS~\citep{neal_annealed_2001} and SMC samplers~\citep{del_moral_sequential_2006,doucet_sequential_2001} carry importance weights and yield consistent normalising constants. Boltzmann generators~\citep{noe_boltzmann_2019}, stochastic normalising flows~\citep{wu_stochastic_2020}, AFT~\citep{arbel_annealed_2021}, CRAFT~\citep{matthews_continual_2022}, FAB~\citep{midgley_flow_2022} and preconditioned Monte Carlo~\citep{karamanis_accelerating_2022} add learned transport proposals, as do flow-based proposals within nested sampling~\citep{moss_accelerated_2020,williams_nested_2021,williams_importance_2023,lange_nautilus_2023}. These usually use the flow as a global importance proposal, whose acceptance degrades with dimension below the largest scales we target here~\citep{grenioux_sampling_2023}. NTNS is distinct in using the flow for local mutation, which is particularly suited to the hard constrained targets of nested sampling.

\textit{Flow-augmented MCMC.} A learned flow has also been used to accelerate standard MCMC on a fixed target, either by reparameterising the target and running a classical kernel in the flow's latent space~\citep{hoffman2019neutralizingbadgeometryhamiltonian,cabezas_transport_2023}, or as a global independence proposal interleaved with local Langevin moves~\citep{gabrie_adaptive_2022,wong_flowmc_2023}. Both need an invertible flow with a tractable density, and in both the flow is fitted to one fixed target and the samples are the chain states. NTNS uses the flow in a third way, as the drift of a local Langevin kernel, which needs neither the inverse map nor the density. Its flow is a transport along the whole nested sampling path rather than a fit to one target, which keeps open the neural sampler goal of amortising a transport across systems~\citep{klein_timewarp_2023,klein_transferable_2024}.

\textit{Equivariant architectures.} For an $n$-body particle system in $\mathbb{R}^3$ the energy is invariant under the Euclidean group $\mathrm{E}(3)$ and the symmetric group $\mathbb{S}_n$ on particles~\citep{kohler_equivariant_2020}. Because $\mathrm{E}(3) \times \mathbb{S}_n$ generates entire families of symmetry-related configurations, classical samplers mix poorly unless one quotients out these redundancies---exactly the inductive bias provided by equivariant architectures. E(3)-equivariant graph neural networks (EGNNs) were introduced by \citet{satorras_en_2021}; SE(3)-equivariant extensions have since been applied to a variety of protein and molecular sampling problems~\citep{klein_equivariant_2023,midgley_se3_2023,bose_se3-stochastic_2023,yim_fast_2023}. We adopt an EGNN backbone in line with~\citep{akhound-sadegh_iterated_2024,havens_adjoint_2025}; details in~\cref{sec:egnn_details}.

\section{Method}\label{sec:method}

We now present NTNS, which targets the constrained prior sequence $\{\pi_i^*\}$ of \cref{eq:constrained_prior} by retaining the standard outer nested sampling kernel and replacing only the constrained prior replacement step with a learned MH kernel. Following the standard sequential Monte Carlo decomposition~\citep{chopin_introduction_2020}, each outer iteration consists of: (i) \emph{reweight}: delete the $k$ highest-energy live points, fix the threshold $E^*$, and record them as dead with weights $w_i(\beta)$ from \cref{eq:weights}; (ii) \emph{resample}: draw $k$ parents uniformly from the $m-k$ survivors; (iii) \emph{mutate}: apply a constrained update kernel $\mathcal{K}_{E^*}$ to each parent; and (iv) \emph{replace}: insert the mutated samples into the live set. NTNS modifies only the mutation kernel; the outer loop is stated in full in \cref{alg:ns_outer}.

The mutation kernel $\mathcal{K}_{E^*}$ is constructed at every NS iteration by training a flow matching model $Q_\phi$ that transports a mean-subtracted multivariate normal base to the survivors remaining after the reweight step. The learned velocity field can be used in two ways: as a direct density proposal inside an independent Metropolis--Hastings kernel (\cref{sec:irmh}), or as a learned drift inside an MH corrected Langevin kernel. We focus on the latter in the main text, finding it the most effective and scalable construction. Both choices preserve $\pi_i^*$-invariance via the MH correction, but the Langevin construction admits a cheap correction that avoids the ODE integration making flow-density evaluation prohibitive in high dimensions.

\subsection{Neural drift Monte Carlo kernels}\label{sec:flow_mala}
The learned flow $Q_\phi$ supplies two MH-correct mutation kernels for $\pi_i^*$: a density-based independent proposal that uses the flow density $Q_\phi(x)$ directly, and a velocity-based local proposal that uses only the velocity field $v_\phi(x, t)$ as a Langevin drift. A useful lens on many learned samplers is that they neutralise pathological target geometry by mapping the target through a learned transport from a simple base measure~\citep{hoffman2019neutralizingbadgeometryhamiltonian,albergo_nets_2025}.

Within the ODE-based flow family of \cref{eq:ode}, prior work has used the deterministic transport directly as a proposal density. With a continuous normalising flow~\citep{chen_neural_2018} the divergence integral in \cref{eq:change_of_vars} costs $\mathcal{O}(d)$ vector-Jacobian products per integration step, prohibitive at the dimensions we target. We implement this independent MH (IMH) kernel for completeness and analyse its scaling, alongside cheaper alternatives, in \cref{sec:irmh}.

We propose an alternative that avoids flow density evaluation entirely. We train a flow $Q_\phi$ on the current live points, which are samples from $\pi_i^*$. The flow velocity $v_\phi(x, t)$ transports samples from a Gaussian base toward $\pi_i^*$; at intermediate times $v_\phi(x, t^*)$ provides an informed drift towards regions of high $\pi_i^*$ density (\cref{fig:pt:compression}). As the kernel mutates particles already drawn from the constrained target, no transport from the flow prior is needed and a fixed late time is informative for this target; we use $t^* = 0.5$, which smooths the otherwise hard edge, and sample quality is only mildly sensitive to this choice (\cref{sec:ablations}).

We use this learned velocity as the drift in a MALA-style update~\citep{roberts_exponential_1996}, with $v_\phi(x, t^*)$ in place of the usual $\nabla \log \pi$ score:
\begin{equation}\label{eq:mala_proposal}
    x_{t+1} = x_t + \epsilon \, v_\phi(x_t, t^*) + \sqrt{2\epsilon}\,\xi, \qquad \xi \sim \mathcal{N}(0, I).
\end{equation}
The MH acceptance probability is
\begin{equation}\label{eq:mala_accept}
    \alpha = \min\left(1,\; \frac{\pi(x_{t+1})\, q(x_t \mid x_{t+1})}{\pi(x_t)\, q(x_{t+1} \mid x_t)}\right) \cdot \mathbf{1}[E(x_{t+1}) < E^*].
\end{equation}
At first glance this appears to require evaluating a density $q$---but crucially, $q$ here is the \emph{Gaussian transition density}, not the flow density $Q_\phi$:
\begin{equation}\label{eq:proposal_density}
    q(x_{t+1} \mid x_t) = \mathcal{N}(x_{t+1}; x_t + \epsilon v_\phi(x_t, t^*), 2\epsilon I).
\end{equation}
Evaluating $q(x_{t+1} \mid x_t)$ requires only a forward pass of $v_\phi$ to compute the mean; the reverse proposal $q(x_t \mid x_{t+1})$ requires one further pass at $x_{t+1}$, and the prior ratio is closed-form. The current state drift can be cached in a Markov Chain construction reducing \cref{eq:proposal_density} to a single network evaluation to evaluate the transition probability. The MALA-style construction therefore avoids the backward ODE that IMH requires for $Q_\phi$. For fixed $\phi$ and $t^*$ the kernel preserves $\pi_i^*$-invariance by standard MH detailed balance (\cref{prop:correctness}); full pseudocode in \cref{alg:flow_mala}; further details in \cref{sec:appendix}.

\subsubsection{Learned kernels within nested sampling}

This construction also addresses a gap in the nested sampling literature. In \cref{eq:constrained_prior} the target is the constrained prior, for which the dominant replacement strategy is slice sampling~\citep{neal_slice_2003,handley_polychord_2015,yallup_nested_2026}. Bringing gradient-based moves to this setting is non-trivial because the exact target has a hard energy constraint: away from the contour, the score is just $\nabla \log \pi(x)$, which reflects the unconstrained prior rather than the geometry of the constrained region, while the boundary contribution is singular. Existing approaches therefore require reflective dynamics to respect the constraint~\citep{pmlr-v235-lemos24a}, but these can be difficult to scale robustly in practice~\citep{PhysRevE.111.045308}. Our approach instead uses a learned surrogate $Q_\phi$ to provide a smooth, gradient-like signal adapted to the support of $\pi_i^*$, while retaining exactness through a Metropolis--Hastings correction. This is particularly valuable in high dimensions, where proposals that do not exploit gradient information suffer much poorer scaling---existing theory for hybrid slice samplers gives mixing times that grow polynomially in $d$~\citep{power2025weakpoincareinequalitycomparisons}.

\begin{proposition}[Invariant distribution of the constrained prior mutation kernel]\label{prop:correctness}
The Markov kernel defined by \cref{eq:mala_proposal,eq:mala_accept} leaves the constrained prior $\pi_i^*$~(\cref{eq:constrained_prior}) invariant, regardless of the quality of the flow $Q_\phi$.
\end{proposition}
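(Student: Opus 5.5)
The plan is to verify detailed balance of the MH kernel with respect to $\pi_i^*$ and conclude invariance by integrating. First, fix the network parameters $\phi$, the time $t^*$, the step size $\epsilon$ and the threshold $E^*=E_i$ for the duration of the mutation. Then $x\mapsto \mu(x)=x+\epsilon v_\phi(x,t^*)$ is a deterministic measurable map, and \cref{eq:proposal_density} defines a proposal kernel $q(y\mid x)=\mathcal{N}(y;\mu(x),2\epsilon I)$. This density is strictly positive and finite for all $x,y$ whatever the network outputs, so the reverse density $q(x\mid y)$ never vanishes and the Hastings ratio in \cref{eq:mala_accept} is well defined. This is the only place the flow enters; no property of $Q_\phi$ (accuracy, invertibility, tractable density) is used. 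I would also note that $\phi$ must not depend on the current chain state. Training on the survivors before the mutation, with the parameters frozen during it, ensures this.

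Next, write the kernel as
\[
K(x,\mathrm{d}y)=\alpha(x,y)\,q(y\mid x)\,\mathrm{d}y+r(x)\,\delta_x(\mathrm{d}y),
\]
where $r(x)$ is the rejection mass. The key observation is that $\pi(y)\mathbf{1}\{E(y)<E^*\}$ is proportional to $\pi_i^*(y)$, with the constant $X_i$ cancelling in ratios. Hence, for a current state $x$ in the support ($E(x)<E^*$), the acceptance in \cref{eq:mala_accept} equals the standard MH acceptance for the target $\pi_i^*$:
\[
\alpha(x,y)=\min\!\left(1,\frac{\pi_i^*(y)\,q(x\mid y)}{\pi_i^*(x)\,q(y\mid x)}\right).
\]
The indicator on $y$ implements $\pi_i^*(y)=0$ outside the constraint.

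Then check the detailed balance identity
\[
\pi_i^*(x)\,q(y\mid x)\,\alpha(x,y)=\min\{\pi_i^*(x)q(y\mid x),\,\pi_i^*(y)q(x\mid y)\},
\]
which is symmetric in $(x,y)$. The singular rejection part is trivially reversible. Integrating over $x$ then gives $\int\pi_i^*(\mathrm{d}x)K(x,\cdot)=\pi_i^*$. Since the parents are drawn uniformly from survivors, which are distributed as $\pi_i^*$ by the nested sampling construction, the mutated points remain $\pi_i^*$-distributed, and this holds for any number of kernel iterations.

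The main obstacle is the handling of the hard constraint rather than the algebra. I need to check that the case split is consistent:
\begin{itemize}
\item when $E(y)\ge E^*$, both sides of the identity vanish, because $\pi_i^*(y)=0$ and the indicator zeroes the acceptance;
\item states with $E(x)\ge E^*$ are never reached from the support, so $\alpha$ need only be defined for $x$ in the support.
\end{itemize}
I also need to remark that invariance requires a fixed $\phi$. Adapting $\phi$ inside a single chain would break the argument, which is why the proposition is stated for fixed $\phi$ and $t^*$.
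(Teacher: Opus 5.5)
Your proposal is correct and follows the paper's route: a Metropolis--Hastings detailed-balance check in which $X_i$ cancels and the indicator $\mathbf{1}[E(x')<E^*]$ zeroes every move out of the support, with your symmetric $\min$ identity standing in for the paper's two-case split, while the paper additionally reads all densities on the zero-CoM subspace $S$ for the particle systems and adds a uniqueness remark via irreducibility. One caution on your remark that parents drawn from the survivors give $\pi_i^*$-distributed mutants, which the proposition does not need: $\phi$ is trained on those same survivors, parent included, so invariance of each fixed-$\phi$ kernel does not by itself make the mutated points exactly $\pi_i^*$-distributed, and that sentence should be qualified rather than asserted.
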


\noindent The proof is a standard detailed-balance argument for Metropolis--Hastings kernels; see \cref{sec:proof_correctness}. The proposition establishes the invariant distribution of the mutation kernel used inside each NS iteration. As with all finite-move NS/SMC implementations~\citep{chopin_introduction_2020}, the full algorithm additionally depends on the practical adequacy of the finite mutation budget. As in SMC samplers with Markov rejuvenation kernels, including flow-assisted methods such as AFT~\citep{arbel_annealed_2021} and CRAFT~\citep{matthews_continual_2022}, the practical algorithm does not run each chain to exact equilibrium; instead, a finite number of corrected Markov moves is used to produce an approximate sample from the next particle population. The role of the MH correction is to ensure that transport error cannot change the invariant constrained prior, while the number of inner steps $T$ controls the remaining finite-mixing error. We therefore treat $T$ as a numerical accuracy parameter and assess stability by varying it (\cref{sec:ablations}), analogous to step size or particle number checks in SMC.

\subsection{Training and adaptation}\label{sec:practical_ntns}
We embed the learned drift MALA kernel in the outer NS loop (\Cref{alg:ns_outer}) with a few practical choices. The initial live set is produced by drawing $100\,m$ samples from the prior and keeping the $m$ lowest-energy---a standard trick that yields a lower-variance training set for the first flow than raw prior draws, and is itself just a nested sampling replacement step at $\sim$1\% rejection efficiency. Only the $m-k$ surviving live points are available as training data at each subsequent iteration; the live set thereby serves as the sample buffer, holding samples of the current constrained target rather than the noised model samples of the replay buffer used in practice by many neural samplers~\citep{akhound-sadegh_iterated_2024}. Over hundreds of iterations training a flow from scratch each time is prohibitive (\Cref{sec:ablations}); we instead \emph{warm start} from the previous iteration's parameters and briefly fine-tune, exploiting the small change between $\pi_{i-1}^*$ and $\pi_i^*$. Network training nevertheless dominates runtime. We adapt $\epsilon$ by stochastic step size adaptation~\citep{robbins_stochastic_1951} on the empirical acceptance pooled across the $k$ chains and $T$ inner steps, with $t^* = 0.5$ throughout. If acceptance collapses we retrain from scratch on the current live points with $\epsilon$ reset (\Cref{fig:lj55_seeds_diagnostics}). NTNS is sequential and adaptive rather than a single target equilibrium sampler: each MH kernel targets $\pi_i^*$ in equilibrium, while the outer loop traverses $\{\pi_i^*\}$ until standard NS stopping criteria are satisfied~\citep{skilling_nested_2006} (\Cref{sec:convergence}).

\section{Experiments}\label{sec:experiments}

A standard set of test problems has emerged for neural samplers targeting Boltzmann distributions: the double-well potential DW-4 (8D), the Lennard-Jones cluster LJ-13 (39D), and LJ-55 (165D)~\citep{akhound-sadegh_iterated_2024,havens_adjoint_2025}. These targets exhibit multimodal energy landscapes and scale to challenging dimensionality. They model four particles interacting in 2D (DW-4), and 13- and 55-particle systems interacting under the Lennard-Jones potential. Despite the simplicity of the energy functions, these are challenging benchmarks; full definitions are given in \cref{sec:target_models}.
Experiments are implemented in the \texttt{jax} framework~\citep{bradbury_jax_2018}, using the \texttt{blackjax} library as the backbone for the stochastic sampling~\citep{cabezas_blackjax_2024}. The nested sampling outer loop then follows the patterns described in~\citep{yallup_nested_2026}.
All experiments are run on a single NVIDIA GH200 Grace Hopper system~\citep{mcintosh-smith_isambard-ai_2024}, with 96\,GB of GPU memory.

We follow standard choices for the EGNN architecture~\citep{satorras_en_2021} when applied to these problems~\citep{akhound-sadegh_iterated_2024,havens_adjoint_2025}. NTNS hyperparameters are detailed in~\cref{sec:hyperparameters}. We compare against iDEM~\citep{akhound-sadegh_iterated_2024}, Adjoint Sampling (AS)~\citep{havens_adjoint_2025} and the Adjoint Schr\"odinger Bridge Sampler (ASBS)~\citep{liu_adjoint_2025}, the neural samplers with reported results at LJ-55 scale. We omit learned SMC-style samplers (e.g.\ CRAFT, FAB) as these have not matched diffusion samplers on LJ-55 in prior work~\citep{akhound-sadegh_iterated_2024}. We additionally include a direct classical nested sampling control based on vectorized Nested Slice Sampling (NSS)~\citep{yallup_nested_2026}, holding the outer schedule fixed while replacing the learned mutation kernel (\cref{sec:nss_control}); we leave a broader survey of hand-designed constrained kernels to future work. We summarise the performance of each sampler in \cref{tab:w2_metrics}, using the distance metrics defined in~\cref{sec:metrics} computed against long-run parallel reference MCMC chains~\citep{klein_equivariant_2023,akhound-sadegh_iterated_2024}. Interatomic distance and energy are sensible summary statistics; the interatomic distance in particular emphasises the multimodality of these targets. For the Lennard-Jones systems we display these plots in~\cref{fig:lj_diagnostics}, and include the DW-4 equivalent in~\cref{fig:dw4_diagnostics}.

NTNS achieves the best $r$-$W_2$ on every system and the best $E$-$W_2$ on the two Lennard--Jones systems; on the small DW-4 target the baselines are within statistical error on $E$-$W_2$ and the choice of sampler matters little. The clearest gap opens on the highest-dimensional benchmark. We complement this with the runtimes in~\cref{tab:timing}: NTNS is significantly faster in walltime to train, although AS-based methods require fewer target evaluations. We focus particularly on the LJ-55 system, where, to our knowledge, NTNS is the first method to converge consistently across seeds. We support this with five independent seeds of iDEM and NTNS (\cref{fig:seeds}) and find throughout that all methods can suffer training divergences (we observed instabilities in AS and ASBS as well); NTNS successfully mitigates these with a restart from the current live set. On the most challenging system, LJ-55, NTNS gives over an order of magnitude reduction in $r$-W$_2$ and $E$-W$_2$ and converges consistently in approximately 5.5 hours.

\begin{figure}[t]
    \centering
    \begin{subfigure}[t]{\columnwidth}
        \centering
        \includegraphics[width=\linewidth]{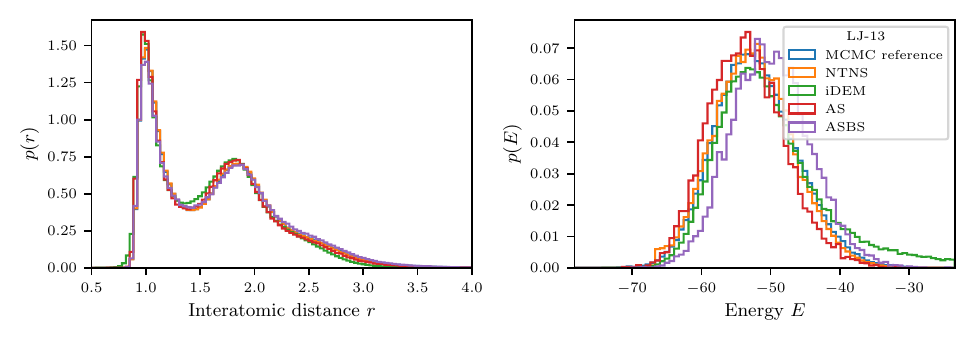}
        \caption{LJ-13.}\label{fig:lj_diagnostics:lj13}
    \end{subfigure}\\
    \begin{subfigure}[t]{\columnwidth}
        \centering
        \includegraphics[width=\linewidth]{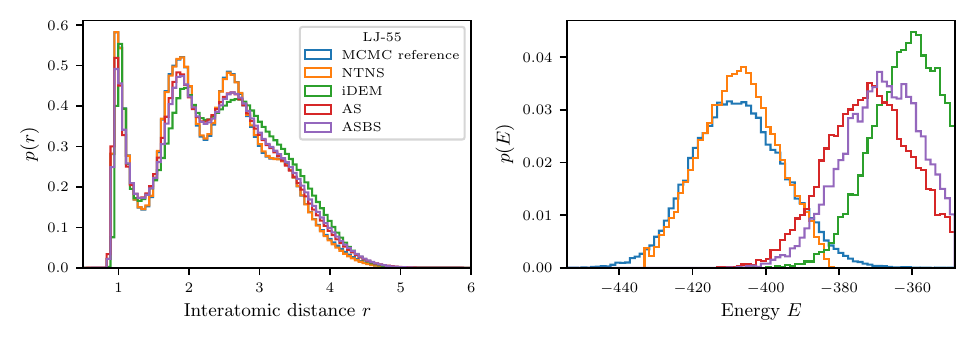}
        \caption{LJ-55.}\label{fig:lj_diagnostics:lj55}
    \end{subfigure}
    \caption{Posterior diagnostics for LJ-13 (\cref{fig:lj_diagnostics:lj13}) and LJ-55 (\cref{fig:lj_diagnostics:lj55}). For each system, the left panel shows the pooled pairwise interatomic distance distribution and the right panel the energy distribution for NTNS, iDEM, AS and ASBS, compared against long-run reference MCMC samples. NTNS closely matches the MCMC reference on both summaries across system sizes, while the neural sampler baselines place their energy mass above the reference, with a shift and tail that worsen with system size.}\label{fig:lj_diagnostics}
\end{figure}

\begin{table}[h]
\centering
\caption{Wasserstein-2 distances to MCMC reference samples for interatomic distances ($r$-W$_2$) and energy ($E$-W$_2$) for NTNS (ours) and other neural samplers. Bold text indicates the best performance for each metric. For LJ-55, the NTNS and iDEM entries are median-seed runs drawn from the five-seed sweep detailed in \cref{sec:convergence}.}
\label{tab:w2_metrics}
\resizebox{\textwidth}{!}{%
\begin{tabular}{lcccccc}
\toprule
& \multicolumn{2}{c}{DW-4} & \multicolumn{2}{c}{LJ-13} & \multicolumn{2}{c}{LJ-55} \\
\cmidrule(lr){2-3} \cmidrule(lr){4-5} \cmidrule(lr){6-7}
Method & $r$-W$_2$ $\downarrow$ & $E$-W$_2$ $\downarrow$ & $r$-W$_2$ $\downarrow$ & $E$-W$_2$ $\downarrow$ & $r$-W$_2$
$\downarrow$ & $E$-W$_2$ $\downarrow$ \\
\midrule
iDEM & $0.296 \unc{0.043}$ & $\mathbf{0.226} \unc{0.072}$ & $0.102 \unc{0.007}$ & $55.7 \unc{37.0}$ & $0.135 \unc{0.001}$ & $48.4 \unc{0.42}$ \\
AS   & $0.212 \unc{0.033}$ & $0.445 \unc{0.086}$ & $0.044 \unc{0.005}$ & $1.39 \unc{0.24}$ & $0.060 \unc{0.002}$ & $36.4 \unc{0.40}$ \\
ASBS & $0.172 \unc{0.044}$ & $0.304 \unc{0.116}$ & $0.029 \unc{0.005}$ & $2.24 \unc{0.22}$ & $0.082 \unc{0.001}$ & $40.2 \unc{0.50}$ \\
\midrule
NTNS & $\mathbf{0.154} \unc{0.034}$ & $0.286 \unc{0.086}$ & $\mathbf{0.018} \unc{0.008}$ & $\mathbf{0.58} \unc{0.15}$ & $\mathbf{0.008} \unc{0.001}$ & $\mathbf{2.68} \unc{0.17}$ \\
\bottomrule
\end{tabular}%
}
\end{table}

\subsection{Partition function calculation}
\begin{figure}[t]
    \centering
    \begin{subfigure}[t]{0.5\columnwidth}
        \centering
        \includegraphics[width=\linewidth]{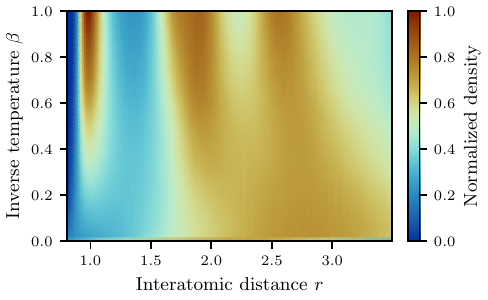}
        \caption{Phase diagram from a single NTNS run on LJ-55, showing sample distributions across inverse temperatures $\beta$.}\label{fig:phase_diagram:beta}
    \end{subfigure}\hfill
    \begin{subfigure}[t]{0.44\columnwidth}
        \centering
        \includegraphics[width=\linewidth]{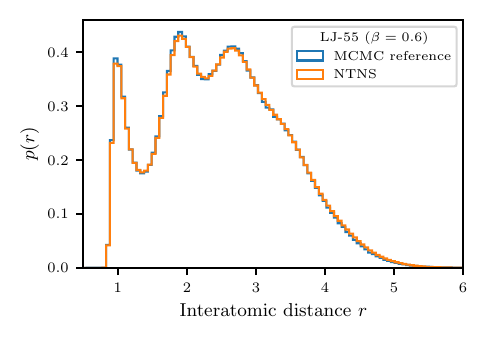}
        \caption{Validation at $\beta = 0.6$ against NUTS reference, confirming accurate post-hoc reweighting.}\label{fig:phase_diagram:validation}
    \end{subfigure}
    \caption{Temperature reweighting from a single LJ-55 NTNS run: reweighted sample distributions across values of the inverse temperature $\beta$ expose the phase structure (\cref{fig:phase_diagram:beta}), and validation at $\beta = 0.6$ against an independent NUTS reference confirms post-hoc recovery of intermediate temperature targets (\cref{fig:phase_diagram:validation}).}\label{fig:phase_diagram}
\end{figure}

A distinctive capability of nested sampling is recovering the temperature dependent partition function $\mathcal{Z}(\beta)$ from a single run via post-hoc reweighting of the dead points (\cref{eq:weights}). \Cref{fig:phase_diagram:beta} shows the resulting LJ-55 sample distribution across inverse temperatures $\beta \in [0,1]$, exposing the phase structure of the system without separate simulations at each temperature. We validate the reweighting against NUTS~\citep{hoffman_no-u-turn_2014} samples at $\beta = 0.6$ (\cref{fig:phase_diagram:validation}), confirming that reweighted NTNS samples reproduce the intermediate temperature target. As an independent cross-check, thermodynamic integration from NUTS samples gives $\log \mathcal{Z}(1.0) - \log \mathcal{Z}(0.6) = 148.8 \pm 0.4$, in agreement with NTNS's $147.5 \pm 0.4$ (methodology and further analysis in \cref{sec:phase_appendix}). The LJ-55 run produced $486{,}400$ weighted samples in total ($204{,}800$ from the initial prior-rejection oversample of the live set and $281{,}600$ from the subsequent flow-kernel iterations), yielding effective sample sizes of $54{,}705$ at $\beta = 0.6$ and $59{,}282$ at $\beta = 1.0$.

\section{Discussion}\label{sec:discussion}
The key design move in NTNS is to replace an IMH-style use of flows as global proposals, common in prior transport-augmented particle methods, with a more local learned drift inspired by diffusion samplers. This improves scalability, while the NTNS particle system still supplies the global exploration needed for good mixing. In that sense, NTNS occupies a useful middle ground between local corrected MCMC and population-based global search. The matched nested slice sampling control (\cref{sec:nss_control}) confirms that the gain over classical nested sampling comes from this kernel rather than from the outer loop.

Positioning NTNS as a localized approach also clarifies our comparison with AS and ASBS. Those methods are more energy-evaluation efficient than NTNS, but target energies in our current benchmarks are cheap to evaluate, rendering this difference negligible in practice. For real molecular systems with neural-network-learned energy functions~\citep{pmlr-v267-fu25h,batatia2023foundation}, however, energy evaluation can be considerably heavier. Adjoint matching uses fewer energy evaluations by design (gradients only at trajectory endpoints), making its integration with our inner kernel an important direction for further study. NTNS, by contrast, never evaluates the force. The neural sampler baselines regress onto $\nabla E$, and on a stiff potential the divergent short-range force must be clipped to keep that target finite~\citep{akhound-sadegh_iterated_2024,havens_adjoint_2025}; NTNS sees the target only through energy comparisons against a threshold, so no clipping or smoothing of the energy is needed, an advantage for stiff or non-differentiable energy functions.

Because our target energies are cheap, our current settings allocate the bulk of compute to flow training, but this balance is a tunable allocation rather than a fixed limitation. Modulating parameters such as flow epochs, the number of chains, inner steps $T$, and warm-start aggressiveness actively shifts the computational bottleneck to accommodate more expensive targets. Currently, we rely on a basic warm-start heuristic, and our evaluation cost is also driven by the need to scale the number of inner-kernel steps with the problem dimension. The ablations in \cref{sec:ablations} suggest that both choices may be conservative. More efficient use of particle history, online tuning of the inner steps, and methods that exploit repeated steps more directly are promising avenues for future research~\citep{dau_waste-free_2022}.

A natural next step is to integrate these ideas into a fully interacting SMC-type system, as an extension of CRAFT~\citep{matthews_continual_2022}. A longer-term goal shared with neural samplers is amortisation, reusing a transport learned on one system for another~\citep{klein_timewarp_2023,klein_transferable_2024}. NTNS, as a purely neural sampler that iteratively trains its proposal, produces a flow at termination that can be simulated as a neural sampler, and hence follows the neural sampler pattern rather than flow-augmented MCMC. Other interesting avenues are to explore the samples as intermediate checkpoints between modes of different atomic configurations, as a flow-based estimator of free energy differences~\citep{holdijk2023stochastic}.

\section{Conclusion}\label{sec:conclusion}
We have presented Neural Transport Nested Sampling (NTNS), which accelerates the constrained prior replacement step in nested sampling by using a learned drift in a MALA kernel. This lets us integrate expressive conditional flow matching models into nested sampling while avoiding expensive density evaluation. This work combines three successful design patterns: (i) the nested sampling algorithm, with provenance for Lennard--Jones clusters~\citep{partay2010efficient} but reliant on hard-to-design constrained prior updates; (ii) flow transport within SMC-type constructions, which provides learned adaptation but has historically used global proposals that have trouble scaling; and (iii) scalable learned-drift kernels, which scale well but lack the correction guarantees of (ii). NTNS combines these in such a manner that each weakness is mitigated by another.

We demonstrate NTNS scaling to systems of 55 interacting particles, giving significantly improved accuracy over other neural samplers, and being both economical in energy evaluations and runtime. We use a simulation-free objective to train the surrogate drift, and at sampling time each kernel step requires only a single forward pass of the velocity network and a single target-energy evaluation.

Beyond the molecular dynamics setting, NTNS is relevant to a broader class of inference problems where scalable sampling under repeated target evaluation is needed. It also produces a full particle history along the characteristic and unique nested sampling path, which can be reweighted athermally and may support downstream uses beyond sampling itself. The main open challenges are scaling to larger particle systems and handling more complex and costly energy functions.

\subsubsection*{Code and Data Availability}
The algorithm code and example scripts to reproduce the experiments in this paper are available at \url{https://github.com/yallup/ntns}.

\subsubsection*{AI and LLM tool use}
We used OpenAI GPT-5.4 to refine portions of the draft, and Claude Opus 4.7 was used in refining the code. The authors take full responsibility for the final content.

\subsubsection*{Broader Impact}
Efficient sampling of energy landscape has potential impact in a variety of high impact scientific domains, such as drug discovery and research into novel materials. Whilst we work on synthetic energy functions, directionally we envisage downstream impact in such applications.

\subsubsection*{Acknowledgements}
This work was supported by the UKRI Frontier Research Guarantee [EP/X035344/1]. The authors acknowledge the use of resources provided by the Isambard-AI National AI Research Resource (AIRR). Isambard-AI is operated by the University of Bristol and is funded by the UK Government’s Department for Science, Innovation and Technology (DSIT) via UK Research and Innovation; and the Science and Technology Facilities Council [ST/AIRR/I-A-I/1023].

{\small
\bibliographystyle{unsrtnat}
\bibliography{references}
}


\newpage
\appendix
\section{Target Distributions}\label{sec:target_models}

For each $N$-particle system below we work on the zero centre-of-mass affine subspace
\begin{equation}\label{eq:com_subspace}
    S = \{x \in \mathbb{R}^{Nd} : \textstyle\sum_{i=1}^N x_i = 0\},
\end{equation}
of dimension $(N{-}1)d$. The energy $E$ depends only on pairwise distances $r_{ij}$ and is therefore invariant under translations of $\mathbb{R}^{Nd}$; restricting to $S$ quotients out this translational symmetry. The prior in each case is the standard Gaussian on $S$, equivalently $\mathcal{N}(0, I_{Nd})$ restricted to $S$; together with $E$ this gives a target invariant under $\mathrm{O}(d) \times \mathbb{S}_N$ (rotation/reflection of $\mathbb{R}^d$ and permutation of identical particles). Ambient dimensions stated for each system below refer to $\mathbb{R}^{Nd}$.

\subsection{Double-Well Potential (DW-4)}\label{sec:dw4_model}

The DW-4 system consists of $N=4$ particles in $\mathbb{R}^2$, giving $d = 8$ dimensions.
The energy function is a sum over pairwise interactions:
\begin{equation}
    E(x) = \sum_{i < j} V_\text{DW}(r_{ij}), \qquad V_\text{DW}(r) = a(r - r_0)^4 + b(r - r_0)^2 + c,
\end{equation}
where $r_{ij} = \|x_i - x_j\|$ and we use parameters $a = 0.9$, $b = -4$, $c = 0$, $r_0 = 4$~\citep{kohler_equivariant_2020,akhound-sadegh_iterated_2024}.
The prior is the standard Gaussian on the zero-CoM subspace $S$ (\cref{eq:com_subspace}).
This target exhibits bimodal structure in the interatomic distance distribution.

\subsection{Lennard-Jones Cluster (LJ-13)}\label{sec:lj13_model}

The LJ-13 system consists of $N = 13$ particles in $\mathbb{R}^3$, giving $d = 39$ dimensions.
The energy is the standard Lennard-Jones potential:
\begin{equation}
    E(x) = 4\varepsilon \sum_{i < j} \left[\left(\frac{\sigma}{r_{ij}}\right)^{12} - \left(\frac{\sigma}{r_{ij}}\right)^{6}\right],
\end{equation}
where $r_{ij} = \|x_i - x_j\|$ and we use reduced units $\varepsilon = \sigma = 1$.
The prior is the standard Gaussian on the zero-CoM subspace $S$ (\cref{eq:com_subspace}).

\subsection{Lennard-Jones Cluster (LJ-55)}\label{sec:lj55_model}

The LJ-55 system consists of $N = 55$ particles in $\mathbb{R}^3$, giving $d = 165$ dimensions.
The energy function is identical to LJ-13 but at significantly larger scale.
This system exhibits a solid-liquid phase transition and represents a challenging benchmark for sampling methods.
The prior is the standard Gaussian on the zero-CoM subspace $S$ (\cref{eq:com_subspace}).

\section{Reference MCMC Data}\label{sec:reference_data}

We use reference MCMC samples from~\citet{klein_equivariant_2023} as redistributed in the iDEM codebase~\citep{akhound-sadegh_iterated_2024} for diagnostic comparisons. These consist of:
\begin{itemize}
    \item DW-4: $100{,}000$ training, $10{,}000$ validation, and $10{,}000$ test configurations, each of shape $(4, 2)$.
    \item LJ-13: $100{,}000$ training, $10{,}000$ validation, and $10{,}000$ test configurations, each of shape $(13, 3)$.
    \item LJ-55: $10{,}000$ training, $10{,}000$ validation, and $10{,}000$ test configurations, each of shape $(55, 3)$.
\end{itemize}
These samples were generated by long MCMC runs at the target temperature and are used to validate our posterior samples via pairwise interatomic distance and energy histograms. Note that NTNS does not use these samples for training---they serve only as independent ground truth for comparison.

\section{Metrics}\label{sec:metrics}

We evaluate sample quality with the 2-Wasserstein distance against MCMC reference samples. For empirical measures $\mu, \nu$ on $\mathbb{R}^d$ the 2-Wasserstein distance under Euclidean ground cost is
\begin{equation}\label{eq:w2}
\mathcal{W}_2(\mu,\nu) = \left( \inf_{\gamma \in \Gamma(\mu,\nu)} \int \|x-y\|_2^2 \,\mathrm{d}\gamma(x,y) \right)^{\!1/2},
\end{equation}
where $\Gamma(\mu,\nu)$ denotes couplings with marginals $\mu$ and $\nu$. We report two scalar summaries. We do not report the geometric $\mathcal{W}_2$ on raw configurations $x \in \mathbb{R}^{Nd}$: even with the POT~\citep{flamary_pot_2021} solver, particle-permutation symmetry in flattened Euclidean space produces a transport plan whose value is dominated by solver inefficiency and finite-sample noise rather than the differences between samplers we wish to measure. We instead use $r$-$\mathcal{W}_2$, the 1D $\mathcal{W}_2$ between empirical distributions of the pooled pairwise interatomic distances
\begin{equation}\label{eq:r_def}
r(x) = \{\,\|x_i - x_j\|_2 \,:\, 1 \le i < j \le N\,\},
\end{equation}
which are E$(n)$- and permutation-invariant by construction and capture local structural information without an alignment step. The second is $E$-$\mathcal{W}_2$, the 1D $\mathcal{W}_2$ between empirical distributions of the target energy $E(x)$. On $\mathbb{R}$ the optimisation in \cref{eq:w2} reduces to sorting both samples and matching them in order; we compute it using the POT package~\citep{flamary_pot_2021} and report $\sqrt{\mathcal{W}_2^2}$. For variance estimation we partition the MCMC reference into $10$ disjoint batches of $1\,000$, draw matched batches from each sampler (weighted draws without replacement for NTNS, uniform shuffling for the diffusion-based baselines), and report the mean and standard deviation across batches. The $E$-$\mathcal{W}_2$ values in \cref{tab:w2_metrics} use the same definition as in AS~\citep{havens_adjoint_2025} and ASBS~\citep{liu_adjoint_2025}, allowing extended comparisons with the numbers reported in those papers.

\section{Implementation Details}\label{sec:appendix}

\subsection{EGNN Architecture}\label{sec:egnn_details}

The velocity network used for particle systems is an E($n$)-equivariant graph neural network (EGNN)~\citep{satorras_en_2021}, consisting of $L$ stacked Equivariant Graph Convolutional Layers (EGCLs) operating on a fully-connected particle graph. Each node $i$ corresponds to a particle with position $\mathbf{x}_i \in \mathbb{R}^3$ and learned features $\mathbf{h}_i \in \mathbb{R}^{n_h}$. The flow time $t$ is broadcast as a scalar to every node and embedded into the hidden width by a single linear layer; no Fourier or sinusoidal time encoding is used. All MLPs in the network ($\phi_e, \phi_x, \phi_h$) use the SiLU activation.

Each EGCL layer updates node positions and features via edge messages. The edge model computes messages from pairwise features:
\begin{equation}\label{eq:egnn_edge}
    \mathbf{m}_{ij} = \phi_e(\mathbf{h}_i, \mathbf{h}_j, \|\mathbf{r}_{ij}\|^2),
\end{equation}
where $\mathbf{r}_{ij} = \mathbf{x}_i - \mathbf{x}_j$. Coordinate updates use the damped-normalized displacement $\hat{\mathbf{r}}_{ij} = \mathbf{r}_{ij} / (\|\mathbf{r}_{ij}\| + 1)$, following \citet{akhound-sadegh_iterated_2024}, and the coordinate MLP $\phi_x$ uses a small-scale kernel initialization ($\sigma^2 = 10^{-3}$, no bias) so the initial velocity field is near-identity. Residual connections are used in the node update, $\mathbf{h}_i' = \mathbf{h}_i + \phi_h(\mathbf{h}_i, \sum_j \mathbf{m}_{ij})$, and the network output is the accumulated displacement $v_\phi(\mathbf{x}, t) = \mathbf{x}^{(L)} - \mathbf{x}^{(0)}$ after mean-subtraction to remove any net center-of-mass drift, yielding an E($n$)-equivariant zero-mean velocity field.

\subsection{Hyperparameters}\label{sec:hyperparameters}

\Cref{tab:hyperparams} lists the NTNS hyperparameters used across all benchmarks; values are held fixed except where noted in the ablations of \cref{sec:ablations}.

\begin{table}[h]
\caption{Hyperparameters for NTNS experiments.}
\label{tab:hyperparams}
\centering
\begin{tabular}{lccc}
\toprule
\textbf{Parameter} & \textbf{DW-4} & \textbf{LJ-13} & \textbf{LJ-55} \\
\midrule
\multicolumn{4}{l}{\textit{Nested sampling}} \\
Live points $m$ & 2048 & 2048 & 2048 \\
Deletion / chains $k$ & 1024 & 1024 & 1024 \\
Inner steps $T$ & 8 & 39 & 165 \\
Initial prior oversample factor & 100 & 100 & 100 \\
Retrain interval & every iteration & every iteration & every iteration \\
\midrule
\multicolumn{4}{l}{\textit{MALA}} \\
Drift time $t^*$ & 0.5 & 0.5 & 0.5 \\
Initial step size $\epsilon_0$ & $10^{-3}$ & $10^{-3}$ & $10^{-3}$ \\
Target acceptance $\alpha$ & 0.3 & 0.3 & 0.3 \\
Gain $\gamma$ (\cref{eq:rm_step}) & 0.5 & 0.5 & 0.5 \\
Decay exponent $\kappa$ (\cref{eq:rm_step}) & 0 & 0 & 0 \\
\midrule
\multicolumn{4}{l}{\textit{Flow training (CFM)}} \\
Epochs per retrain & 150 & 150 & 150 \\
Batch size & 512 & 512 & 512 \\
Learning rate & $10^{-3}$ & $10^{-3}$ & $10^{-3}$ \\
Warm start & Yes & Yes & Yes \\
\midrule
\multicolumn{4}{l}{\textit{Velocity network (EGNN)}} \\
Layers $L$ & 5 & 5 & 5 \\
Hidden features $n_h$ & 128 & 128 & 128 \\
Activation & SiLU & SiLU & SiLU \\
\bottomrule
\end{tabular}
\end{table}

\subsection{Algorithm Pseudocode}\label{sec:algorithms}

We provide detailed pseudocode for the NTNS algorithm. \Cref{alg:ns_outer} describes one nested sampling outer iteration, while \Cref{alg:flow_mala} details the learned drift MALA kernel that NTNS uses to instantiate $\mathcal{K}_{E^*}$.

\begin{algorithm}[h]
\caption{Neural Transport Nested Sampling. Outer kernel structure follows \citet{yallup_nested_2026}; NTNS specialises the mutation kernel via a flow trained online on the live points.}
\label{alg:ns_outer}
\KwIn{Prior $\pi$; energy $E$; live count $m$; batch size $k$; flow class $Q_\phi$; mutation steps $T$; convergence threshold $\delta$}
\KwOut{Dead-point trajectory $\{(x, E, w(\beta))\}$ recovering $\mathcal{Z}(\beta)$ at any $\beta$}
Initialise $\{x_j\}_{j=1}^{m} \sim \pi$\;
\While{$\log \hat{\mathcal{Z}}_\text{live} - \log \hat{\mathcal{Z}} > \delta$}{
    $\mathcal{D} \gets \{x_j : E(x_j) \text{ in top-}k\}$,\quad $E^* \gets \min_{x \in \mathcal{D}} E(x)$ \tcp*{Threshold}
    Record $\mathcal{D}$ as dead points with per-point prior volumes and weights $w(\beta)$ from dynamic live count shrinkage~\citep{fowlie_nested_2021,yallup_nested_2026} (\cref{eq:weights}) \tcp*{Reweight}
    Train $Q_\phi$ on the $m-k$ survivors via \cref{eq:cfm_loss} (warm-started) \tcp*{Train}
    \For{$j = 1, \ldots, k$}{
        Sample parent $x \sim \mathrm{Uniform}(\{x_\ell\} \setminus \mathcal{D})$ \tcp*{Resample}
        $x_j^\text{new} \gets \mathcal{K}_{E^*}^{Q_\phi}(x;\,T)$ \tcp*{Mutate}
    }
    Replace $\mathcal{D}$ with $\{x_j^\text{new}\}_{j=1}^{k}$\;
}
\end{algorithm}

The learned drift MALA kernel exposes a single tuneable, the Langevin step size $\epsilon$, which we adapt between outer iterations of \cref{alg:ns_outer} via the standard stochastic update on the empirical acceptance $\bar a_i$ pooled across the $k$ mutation chains and $T$ inner steps,
\begin{equation}\label{eq:rm_step}
    \log \epsilon_{i+1} \gets \log \epsilon_i + \gamma_i (\bar a_i - \alpha),\qquad \gamma_i = \gamma / i^{\kappa},
\end{equation}
with target acceptance $\alpha$, gain $\gamma$ and decay exponent $\kappa$ ($\kappa = 0$ recovers a constant-gain/dual-averaging variant; $\kappa \in (\tfrac{1}{2},1]$ gives the classical stochastic approximation schedule). The kernel itself is the per-chain Langevin loop:

\begin{algorithm}[h]
\caption{Learned drift MALA kernel $\mathcal{K}_{E^*}^{Q_\phi}$, called inside \cref{alg:ns_outer}; step size $\epsilon$ is adapted between calls via \cref{eq:rm_step}.}
\label{alg:flow_mala}
\KwIn{Parent $x$; flow velocity $v_\phi$; energy threshold $E^*$; step size $\epsilon$; mutation steps $T$; drift time $t^*$}
\KwOut{Mutated sample $x$}
\For{$\ell = 1, \ldots, T$}{
    $\xi \sim \mathcal{N}(0, I)$, projected to zero centre-of-mass for particle systems\;
    $x' \gets x + \epsilon\, v_\phi(x, t^*) + \sqrt{2\epsilon}\,\xi$ \tcp*{Langevin proposal}
    $\alpha \gets \min\!\left(1,\; \dfrac{\pi(x')\, q(x \mid x')}{\pi(x)\, q(x' \mid x)}\right)$ with $q(x' \mid x) = \mathcal{N}\!\big(x';\, x + \epsilon\, v_\phi(x, t^*),\, 2\epsilon I\big)$\;
    \If{$u \sim \mathrm{Uniform}(0,1) < \alpha$ \textbf{and} $E(x') < E^*$}{
        $x \gets x'$ \tcp*{Accept}
    }
}
\Return $x$\;
\end{algorithm}

Key implementation details: (1) The convergence criterion $\log \hat{\mathcal{Z}}_\text{live} - \log\hat{\mathcal{Z}} < \delta$ (typically $\delta = -3$) estimates when remaining live point contribution is negligible. (2) The drift time $t^* = 0.5$ evaluates the velocity at the midpoint of the flow. (3) For particle systems, projecting the Gaussian noise and EGNN drift to zero centre-of-mass keeps the chain on the subspace $S$ (\cref{eq:com_subspace}); the EGNN's mean-subtracted velocity (\cref{eq:egnn_edge} ff.) preserves $S$ by construction. (4) We use $T = d$ Langevin steps per replacement, where $d$ is the dimensionality.

\begin{figure}[h]
    \centering
    \begin{subfigure}[t]{0.48\columnwidth}
        \centering
        \includegraphics[width=\linewidth]{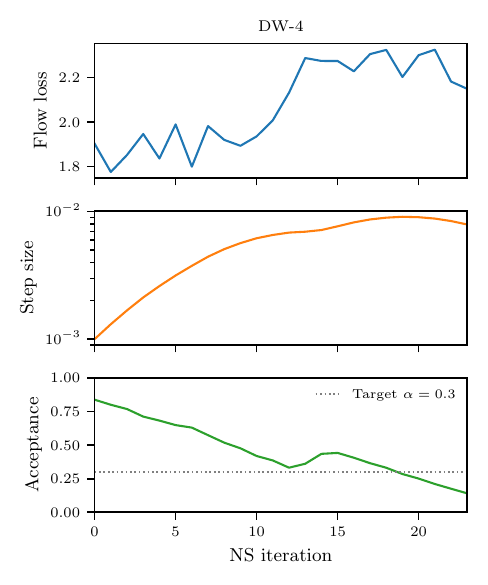}
        \caption{DW-4.}\label{fig:diagnostics:dw4}
    \end{subfigure}\hfill
    \begin{subfigure}[t]{0.48\columnwidth}
        \centering
        \includegraphics[width=\linewidth]{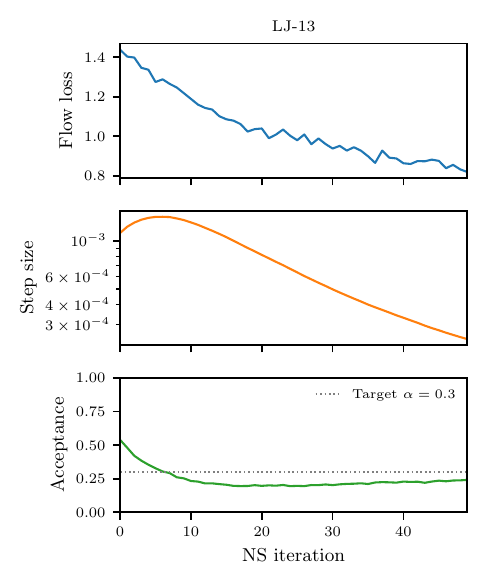}
        \caption{LJ-13.}\label{fig:diagnostics:lj13}
    \end{subfigure}
    \caption{MALA adaptation diagnostics for DW-4 and LJ-13. The acceptance rate and adapted step size remain stable across nested sampling iterations on both systems.}\label{fig:dw4_lj13_diagnostics}
\end{figure}

\section{Proof of Correctness}\label{sec:proof_correctness}

\begin{proof}[Proof of \cref{prop:correctness}]
Let $\pi^*(x) \propto \pi(x) \mathbf{1}[E(x) < E^*]$ denote the constrained prior. For particle systems we read all densities below as densities on the zero-CoM subspace $S$ of \cref{eq:com_subspace}: the noise in \cref{eq:mala_proposal} is projected onto $S$ and the EGNN drift is zero-CoM by construction, so $q(\cdot \mid x)$ is the (non-degenerate) Gaussian transition density on $S$ for $x \in S$ and the chain remains on $S$. The learned drift MALA kernel proposes $x' \sim q(\cdot \mid x)$ from \cref{eq:mala_proposal} and accepts with probability $\alpha(x, x')$ from \cref{eq:mala_accept}.

For any states $x, x'$ both satisfying the energy constraint, the acceptance probability satisfies
\begin{equation}
    \alpha(x, x') = \min\left(1, \frac{\pi(x') q(x \mid x')}{\pi(x) q(x' \mid x)}\right).
\end{equation}
The detailed balance condition requires $\pi^*(x) q(x' \mid x) \alpha(x, x') = \pi^*(x') q(x \mid x') \alpha(x', x)$.

\emph{Case 1.} If $\pi(x') q(x \mid x') \geq \pi(x) q(x' \mid x)$, then $\alpha(x, x') = 1$ and
\begin{equation}
    \alpha(x', x) = \frac{\pi(x) q(x' \mid x)}{\pi(x') q(x \mid x')}.
\end{equation}
Thus the LHS equals $\pi(x) q(x' \mid x)$ and the RHS equals $\pi(x') q(x \mid x') \cdot \frac{\pi(x) q(x' \mid x)}{\pi(x') q(x \mid x')} = \pi(x) q(x' \mid x)$.

\emph{Case 2.} The symmetric case follows analogously.

For proposals violating the constraint ($E(x') \geq E^*$), $\alpha = 0$, so $x' \notin \text{supp}(\pi^*)$ is never reached from $x \in \text{supp}(\pi^*)$. This preserves detailed balance on $\text{supp}(\pi^*)$.

Since detailed balance holds with respect to $\pi^*$, it is a stationary distribution. Under mild conditions (the Gaussian proposal $q$ having positive density everywhere), the chain is irreducible and aperiodic on the constrained region, so $\pi^*$ is the unique stationary distribution. Crucially, this argument depends only on $q$ being a valid proposal density---the flow parameters $\phi$ affect the drift direction but not the validity of the MH correction.
\end{proof}

\section{Ablations}\label{sec:ablations}
\begin{table}[h]
\caption{Ablation results on LJ-13. Single-axis perturbations to NTNS on the
  $d{=}39$ Lennard-Jones target; the top row is the baseline, each block varies one hyperparameter. $r$-$W_2$ / $E$-$W_2$ are mean$\pm$std of the $1$D Wasserstein-2 over $10$ batches of $1000$ NS-weighted draws vs.\ permuted MCMC reference batches; runtime is wall time on one GH200.}
\centering
\resizebox{\textwidth}{!}{%
\begin{tabular}{llrrrr}
\toprule
Ablation & Setting & $r$-$W_2$ & $E$-$W_2$ & Runtime (s) & Evals (M) \\
\midrule
Baseline & - & $0.026 \unc{0.006}$ & $0.605 \unc{0.129}$ & 583 & 10.15 \\
\midrule
Inner steps $T$ & $19$ & $0.031 \unc{0.006}$ & $0.476 \unc{0.077}$ & 598 & 5.05 \\
 & $78$ & $0.021 \unc{0.008}$ & $0.541 \unc{0.185}$ & 612 & 20.17 \\
\midrule
Drift time $t^*$ & $0.1$ & $0.031 \unc{0.006}$ & $0.570 \unc{0.141}$ & 607 & 10.19 \\
 & $0.25$ & $0.027 \unc{0.005}$ & $0.895 \unc{0.168}$ & 617 & 10.15 \\
 & $0.5$ & $0.044 \unc{0.009}$ & $1.078 \unc{0.232}$ & 606 & 10.19 \\
 & $0.75$ & $0.039 \unc{0.006}$ & $0.804 \unc{0.147}$ & 595 & 10.19 \\
 & $0.9$ & $0.020 \unc{0.006}$ & $0.589 \unc{0.114}$ & 597 & 10.11 \\
\midrule
Flow epochs $E_f$ & $75$ & $0.033 \unc{0.004}$ & $1.045 \unc{0.148}$ & 464 & 10.19 \\
 & $300$ & $0.021 \unc{0.004}$ & $0.774 \unc{0.135}$ & 859 & 10.11 \\
\midrule
Live / deletion & $m{=}4096,\ k{=}1024$ & $0.018 \unc{0.007}$ & $0.479 \unc{0.125}$ & 2071 & 21.18 \\
 & $m{=}4096,\ k{=}2048$ & $0.012 \unc{0.006}$ & $0.549 \unc{0.123}$ & 851 & 20.30 \\
\bottomrule
\end{tabular}}
\end{table}

We ablate four single-axis perturbations of NTNS on LJ-13, holding all other hyperparameters at their values in \cref{tab:hyperparams}; runtimes are wall-clock on a single GH200.

Drift time $t^*$ has only a mild effect on sample quality: $r$-$W_2$ varies between 0.020 and 0.044 across the swept range, with the best result at $t^* = 0.9$, suggesting that fixing $t^*$ to a higher value is a sensible default to investigate further. Flow epochs $E_f$ are the dominant handle on runtime, scaling close to linearly: halving to $E_f = 75$ cuts runtime to 464\,s and doubling to $E_f = 300$ increases it to 859\,s, with the deviation from exact proportionality due to fixed-cost overhead. Inner kernel steps $T$ are the dominant handle on target-evaluation count: halving $T$ to $19$ halves the eval count to $\approx 5{\times}10^6$ at only a small cost in $r$-$W_2$ (0.031 vs.\ the baseline 0.026), an important saving for more expensive energy functions where our default $T = d = 39$ is conservatively high. The live set / deletion choice is the most sensitive handle overall: increasing $m$ to $4096$ while holding $k = 1024$ slows the compression rate (yielding a more accurate $\log\mathcal{Z}$, cf.\ \citet{yallup_nested_2026}) but more than triples runtime to 2071\,s; doubling both $m$ and $k$ to $(4096, 2048)$ preserves the baseline compression rate and recoups most of this cost at 851\,s. Both increase per-iteration target evaluations because the number of replacement chains scales with $k$. Across the four axes, sample quality stays robust while runtime and evaluation cost respond predictably to the relevant handles, suggesting a stable underlying algorithm. 

Within this, the flow training loop emerges as the largest overall bottleneck and the most promising target for future optimisation. The $T \in \{T/2,\,T,\,2T\}$ sweep above is the same numerical-accuracy check used for Markov-move budgets in SMC; we recommend running this short stability check on a new target and verifying that posterior diagnostics and, where relevant, $\log\mathcal{Z}(\beta)$ are stable within Monte Carlo variation before committing compute at the production budget.

\subsection{Inner-kernel target acceptance}\label{sec:ablation_target_accept}

The step size update of \cref{eq:rm_step} adapts $\log\epsilon$ toward a target acceptance $\alpha$. For smooth log-concave targets in the high-dimensional limit, the optimal MALA target is $\alpha \approx 0.574$~\citep{roberts_optimal_1998}. We sweep two points on LJ-55, $\alpha = 0.5$ (closer to this textbook canonical) and $\alpha = 0.3$, holding every other hyperparameter fixed and running five seeds at each setting; the seed-aggregate metrics are reported in \cref{tab:seeds_w2}.

The lower target wins on every measure: better fidelity to the MCMC reference on both $r$- and $E$-$W_2$, and a $\sim$3$\times$ tighter cross-seed scatter on $\log\mathcal{Z}(1)$ that brings the TI cross-check from $3.4\sigma$ to $1.5\sigma$ of the NUTS reference (\cref{sec:phase_appendix}). We attribute the departure from the $0.574$ asymptote to the constrained prior structure of NS: the inner kernel targets $\pi^*_i \propto \pi \cdot \mathbf{1}[E < E^*_i]$, whose support has a hard edge that the smooth log-concave analysis underlying \citet{roberts_optimal_1998} does not capture. A lower target acceptance drives the adapted $\epsilon$ to a larger value (mean $\epsilon \approx 2.1{\times}10^{-4}$ at $\alpha = 0.3$ vs.\ $9.7{\times}10^{-5}$ at $\alpha = 0.5$), so proposals more frequently strike the energy boundary and the chain accumulates better coverage of the constraint surface in the same wall-clock budget. We therefore adopt $\alpha = 0.3$ as the canonical setting for the LJ-55 experiments reported elsewhere.

\subsection{Classical nested sampling control}\label{sec:nss_control}

To isolate the contribution of the learned mutation kernel from that of the nested sampling outer loop, we compare NTNS with vectorized Nested Slice Sampling (NSS)~\citep{yallup_nested_2026}. NSS uses a non-learned joint hit-and-run slice sampler for constrained prior replacement. We use the same LJ-55 target and the same $m=2048$, $k=1024$ live-point and deletion schedule as NTNS.

\begin{table}[h]
\centering
\caption{Matched LJ-55 comparison between NTNS and classical NSS. Values are mean $\pm$ standard deviation across independent runs. Lower $W_2$ is better. Runtime is wall time on one GH200; evaluations count target-energy calls.}
\label{tab:nss_control}
\resizebox{\textwidth}{!}{%
\begin{tabular}{lccccccc}
\toprule
Method & $m/k$ & Runs & $r$-$W_2$ $\downarrow$ & $E$-$W_2$ $\downarrow$ & $\log\mathcal{Z}(1)$ & Time (s) & Evals \\
\midrule
NTNS & $2048/1024$ & $5$ & $\mathbf{0.017} \unc{0.020}$ & $\mathbf{3.97} \unc{2.03}$ & $233.85 \unc{0.83}$ & $19{,}541$ & $8.0{\times}10^7$ \\
NSS  & $2048/1024$ & $10$ & $0.066 \unc{0.052}$ & $25.74 \unc{19.37}$ & $230.7 \unc{11.6}$ & $144 \unc{15}$ & $4.82{\times}10^8$ \\
\bottomrule
\end{tabular}%
}
\end{table}

Under the matched outer schedule, NTNS gives substantially lower distance and energy errors while NSS uses approximately six times more target evaluations. NSS is faster in wall time because the analytic Lennard--Jones energy is exceptionally cheap, whereas NTNS runtime is dominated by flow training; we therefore report both resources. Increasing NSS to $m=4096$, $k=2048$ does not establish convergence: its cross-run estimate shifts from $\log\mathcal{Z}(1)=230.7\unc{11.6}$ to $243.0\unc{5.8}$. The controlled comparison shows that the improvement arises from the learned mutation dynamics rather than the nested sampling outer loop alone.

\section{Convergence and Reproducibility}\label{sec:convergence}

A standard nested sampling termination criterion is to stop once the remaining live set contribution to the partition function is negligible---typically when $\log\hat{\mathcal{Z}}_\text{live} - \log\hat{\mathcal{Z}} < -3$~\citep{skilling_nested_2006,handley_polychord_2015}. This implicitly targets $\beta = 1$ and is a sensible default; the threshold can be tuned to target other temperatures or to budget runtime.

This principled stopping rule is a structural advantage of NTNS over the diffusion-based baselines. iDEM, AS, and ASBS run for a fixed wall-clock budget with no internal indication of having reached the true distribution; on LJ-55 in particular all three timed out (\cref{tab:timing}), and the metrics we report for AS and ASBS at the time-out point are consistent with the extended comparisons in \citet{liu_adjoint_2025}. Across the seed sweep (\cref{fig:seeds}), NTNS is consistently the best-performing sampler, and continued training of the diffusion baselines does not drive them to a stable answer.

All methods can suffer training instabilities. NTNS provides a clean response. The target acceptance $\alpha$ interacts with this directly: at $\alpha = 0.5$ the adapted $\epsilon$ is small enough that the chain occasionally drives acceptance to zero, at which point we trigger a restart of the inner kernel from the current live set with $\epsilon$ reset to its initial value (\cref{fig:lj55_seeds:alpha05}). At our canonical $\alpha = 0.3$ this collapse mode does not occur across any of the five seeds we ran (\cref{fig:lj55_seeds:alpha03}); the chain occasionally produces a batch with non-finite flow-training gradients, which we apply a simple online filter to. The forward-pass loss is still logged on those batches, so the reported flow loss can spike, but the trained network and the chain's operating point are unaffected and this does not show up as a difference in across-seed sample quality (\cref{tab:seeds_w2}). Good initialisation helps NTNS too, but the algorithm is markedly more stable than the alternatives.

\begin{table}[h]
\centering
\caption{Computational cost comparison; all runs use a single NVIDIA GH200 GPU. NTNS achieves substantially the lowest walltime across all baselines, while AS / ASBS use fewer energy queries by design---their adjoint-matching objective only requires $\nabla E$ at trajectory endpoints. Every iDEM, AS and ASBS evaluation is differentiated to obtain $\nabla E$; NTNS evaluations are of $E$ alone. Entries marked $^{*}$ reached the 12\,h training time limit before convergence.}
\label{tab:timing}
\begin{tabular}{lcccccc}
\toprule
& \multicolumn{2}{c}{DW-4} & \multicolumn{2}{c}{LJ-13} & \multicolumn{2}{c}{LJ-55} \\
\cmidrule(lr){2-3} \cmidrule(lr){4-5} \cmidrule(lr){6-7}
Method & Time (s) & Evals & Time (s) & Evals & Time (s) & Evals \\
\midrule
iDEM & 9,633 & $1{\times}10^8$ & 19,877 & $1{\times}10^8$ & $43{,}200^{*}$ & $6{\times}10^7$ \\
AS   & 7,260 & $4{\times}10^6$ & $43{,}200^{*}$ & $4{\times}10^6$ & $43{,}200^{*}$ & $\mathbf{3{\times}10^5}$ \\
ASBS & 6,960 & $4{\times}10^6$ & 34,680 & $\mathbf{3{\times}10^6}$ & $43{,}200^{*}$ & $\mathbf{3{\times}10^5}$ \\
\midrule
NTNS & \textbf{233} & $\mathbf{2{\times}10^6}$ & \textbf{583} & $1{\times}10^7$ & \textbf{19,541} & $8{\times}10^7$
\\
\bottomrule
\end{tabular}
\end{table}

\begin{figure}[h]
    \centering
    \begin{subfigure}[t]{\columnwidth}
        \centering
        \includegraphics[width=\linewidth]{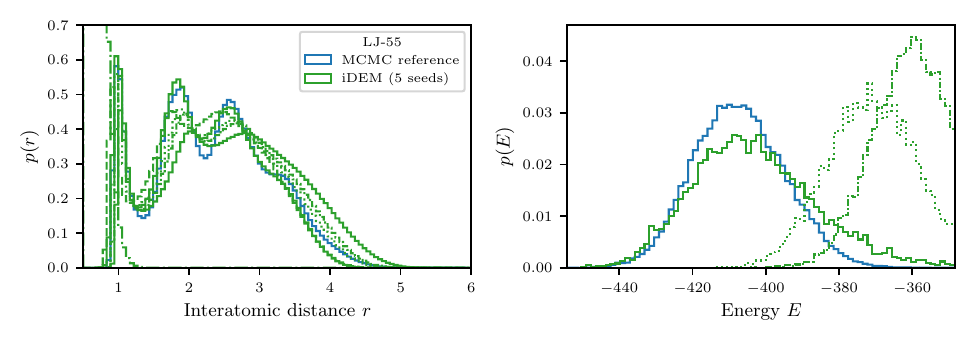}
        \caption{iDEM results showing consistent deviation from reference.}\label{fig:seeds:idem}
    \end{subfigure}\\
    \begin{subfigure}[t]{\columnwidth}
        \centering
        \includegraphics[width=\linewidth]{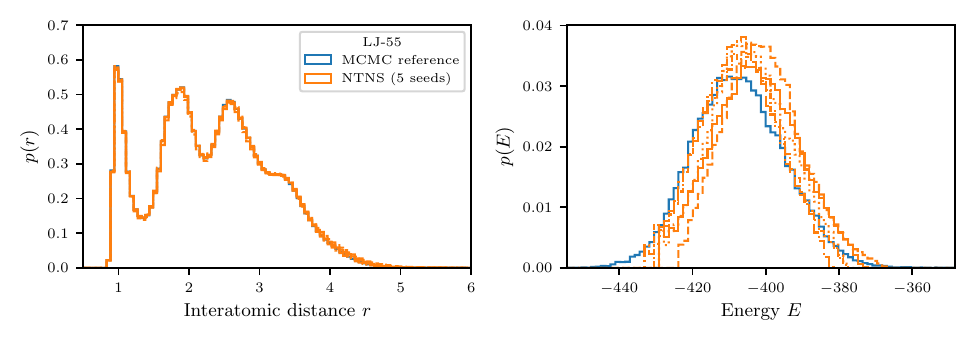}
        \caption{NTNS results showing consistent agreement with reference.}\label{fig:seeds:ntns}
    \end{subfigure}
    \caption{Repeated LJ-55 runs over 5 consecutive seeds. iDEM (top) shows consistent deviation from the MCMC reference, whereas NTNS (bottom) reproduces it across all seeds.}\label{fig:seeds}
\end{figure}

\begin{table}[h]
\centering
\caption{Wasserstein-2 distances to MCMC reference and partition function estimate for the LJ-55 seeded runs in \cref{fig:seeds}, reported as mean $\pm$ std across the five seeds at each setting. Lower $W_2$ is better. The two NTNS rows differ only in the target acceptance $\alpha$ (\cref{sec:ablation_target_accept}); $\alpha = 0.3$ is our canonical setting. The single-run LJ-55 entries in the main \cref{tab:w2_metrics} for both NTNS and iDEM are median-seed runs drawn from this table: NTNS ($r$-$W_2 = 0.008$, $E$-$W_2 = 2.68$) and iDEM ($r$-$W_2 = 0.135$, $E$-$W_2 = 48.4$). The same NTNS run is shown in \cref{fig:lj_diagnostics:lj55} and \cref{fig:phase_diagram}. One of the iDEM trainings diverged so is omitted. $^{*}$Reference value taken from \citet{akhound-sadegh_iterated_2024}.}
\label{tab:seeds_w2}
\begin{tabular}{lccc}
\toprule
Method & $r$-$W_2$ $\downarrow$ & $E$-$W_2$ $\downarrow$ & $\log\mathcal{Z}(1)$ \\
\midrule
iDEM & $0.136 \unc{0.093}$ & $78.6 \unc{75.3}$ & $273.2 \unc{22.2}^{*}$ \\
NTNS ($\alpha = 0.5$) & $0.041 \unc{0.024}$ & $8.4 \unc{3.5}$ & $229.5 \unc{2.2}$ \\
NTNS ($\alpha = 0.3$) & $\mathbf{0.017} \unc{0.020}$ & $\mathbf{3.97} \unc{2.03}$ & $233.85 \unc{0.83}$ \\
\bottomrule
\end{tabular}
\end{table}

\begin{figure}[h]
    \centering
    \begin{subfigure}[t]{0.48\columnwidth}
        \centering
        \includegraphics[width=\linewidth]{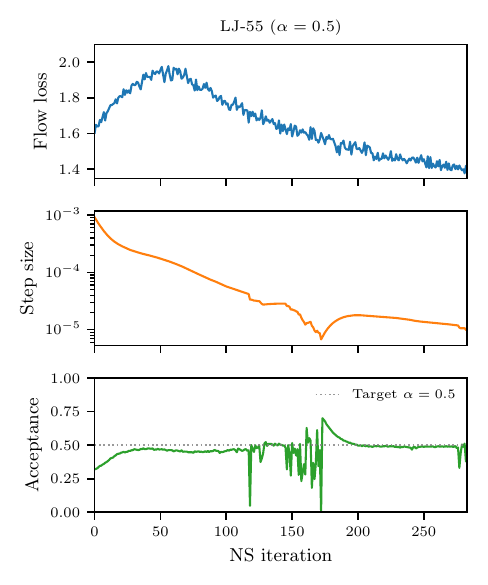}
        \caption{$\alpha = 0.5$.}\label{fig:lj55_seeds:alpha05}
    \end{subfigure}\hfill
    \begin{subfigure}[t]{0.48\columnwidth}
        \centering
        \includegraphics[width=\linewidth]{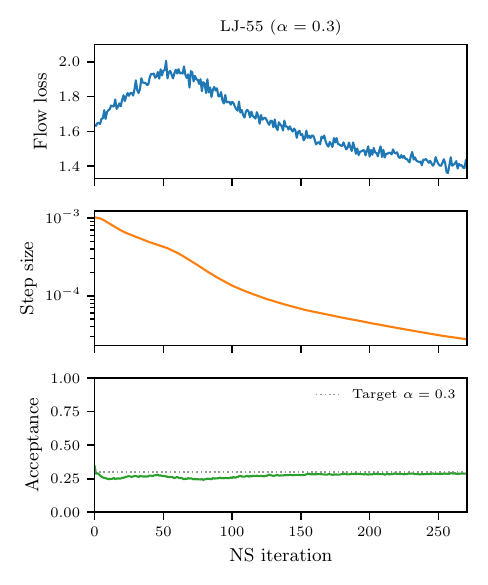}
        \caption{$\alpha = 0.3$.}\label{fig:lj55_seeds:alpha03}
    \end{subfigure}
    \caption{LJ-55 MALA diagnostics at two target acceptances $\alpha$. Each panel shows the per-iteration flow training loss, adapted Langevin step size $\epsilon$, and acceptance rate; the dotted line in the acceptance panel marks $\alpha$. At $\alpha = 0.5$ (\cref{fig:lj55_seeds:alpha05}) the acceptance occasionally collapses to zero and is recovered by a restart from the current live set with $\epsilon$ reset. At $\alpha = 0.3$ (\cref{fig:lj55_seeds:alpha03}) the acceptance stays well above zero throughout, never triggering a restart.}\label{fig:lj55_seeds_diagnostics}
\end{figure}

\section{Phase-Diagram Analysis}\label{sec:phase_appendix}

We perform an independent check of NTNS's partition function estimate on LJ-55 via thermodynamic integration~\citep{kirkwood_statistical_1935} (TI) over $\beta \in [0.6, 1.0]$. The TI identity
\begin{equation}\label{eq:ti}
    \log \mathcal{Z}(\beta_2) - \log \mathcal{Z}(\beta_1) \;=\; -\int_{\beta_1}^{\beta_2} \langle E \rangle_\beta\,\mathrm{d}\beta,
\end{equation}
relates differences in $\log \mathcal{Z}$ to averaged energies. We ran NUTS~\citep{hoffman_no-u-turn_2014} chains at $\beta \in \{0.6, 0.8, 1.0\}$ targeting $\exp(-\beta E(x))\,\pi(x)$ and applied three-point Simpson quadrature to $\langle E \rangle_\beta$, giving $\log \mathcal{Z}(1.0) - \log \mathcal{Z}(0.6) = 148.8 \pm 0.4$. The corresponding NTNS estimate, from the dynamic live count posterior over the prior volume sequence~\citep{fowlie_nested_2021,yallup_nested_2026}, is $\log \mathcal{Z}(1.0) - \log \mathcal{Z}(0.6) = 147.54 \pm 0.44$, with $\log \mathcal{Z}(1.0) = 234.65 \pm 0.34$; across the five reproducibility seeds in \cref{fig:seeds}, the mean and across-seed standard deviation are $147.75 \pm 0.73$ and $\log\mathcal{Z}(1.0) = 233.85 \pm 0.83$ (\cref{tab:seeds_w2}).

\begin{figure}[t]
    \centering
    \includegraphics[width=0.48\columnwidth]{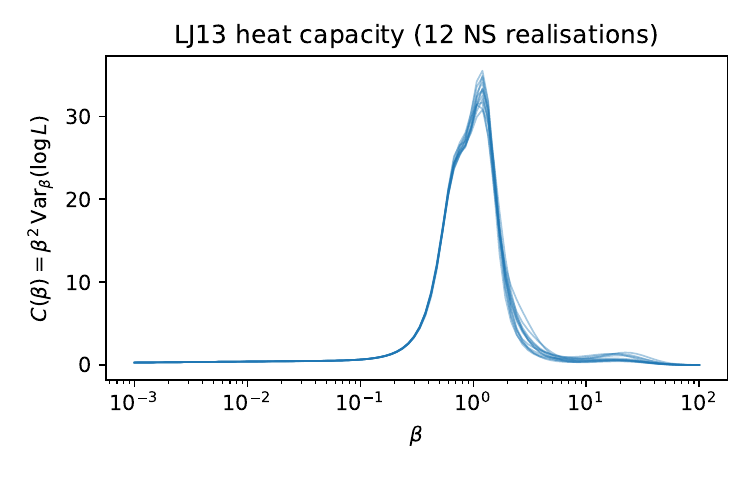}\hfill
    \includegraphics[width=0.48\columnwidth]{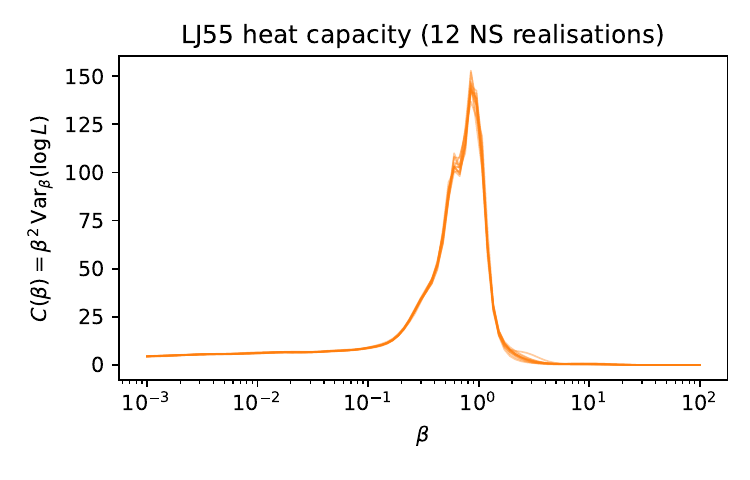}
    \caption{Configurational heat capacity $C(\beta) = \beta^2\,\mathrm{Var}_\beta(\log\mathcal{L})$ for LJ-13 (left) and LJ-55 (right), reweighted from a single NTNS run across the temperatures supported by the dead point trajectory; overlaid traces show the variability across simulated prior volume sequences~\citep{handley_polychord_2015}. Both curves peak near $\beta \approx 1$, which we interpret as a cluster formation transition along the likelihood-tempering path under the Gaussian prior.}\label{fig:heat_capacity}
\end{figure}

\Cref{fig:heat_capacity} reports the configurational heat capacity $C(\beta) = \beta^2\,\mathrm{Var}_\beta(\log\mathcal{L})$ along the tempering path, derived from the same dead point trajectory. We use the Gaussian prior Boltzmann convention $\mu_\beta(x) \propto \pi(x)\exp[-\beta E(x)]$ with $\pi(x) = \mathcal{N}(0, I)$ established by~\citet{klein_equivariant_2023} and adopted by all neural samplers considered in this work; this differs from the molecular dynamics LJ tradition, which uses hard-sphere confinement at cluster-specific radii. Across simulated prior volume sequences~\citep{handley_polychord_2015}, the per-atom peak heights are $C_\text{peak}/N = 2.692 \pm 0.089$ at $\beta_c = 1.169 \pm 0.053$ for LJ-13 and $C_\text{peak}/N = 2.715 \pm 0.038$ at $\beta_c = 0.875 \pm 0.032$ for LJ-55.

The LJ literature reports a melting transition at $\beta \approx 3.4$ ($T^* \approx 0.295$) for LJ-55~\citep{frantz_2001,partay_nested_2014,partay_nested_2021} in the standard $r_m$-form convention. The benchmark energy function we use differs from the more standard convention by a factor of two~\citep{klein_equivariant_2023,akhound-sadegh_iterated_2024,havens_adjoint_2025}, so the literature melting at $\beta \approx 3.4$ maps to $\beta \approx 1.7$ in our convention. Independently, the classical literature uses hard-sphere confinement, whereas the Gaussian prior provides softer confinement that broadens and shifts the transition. After conversion, our LJ-55 peak at $\beta_c = 0.875$ lies below the converted literature melting, consistent with a softened transition along the Gaussian prior tempering path. The two per-atom peak heights also agree to within $1\%$, consistent with approximately additive cluster formation rather than a sharply collective finite-size transition.

This convention difference means the standard neural sampler benchmark target is somewhat softer than in the classical literature. Even so, NTNS is the only method that comes close on LJ-55, and we conjecture it has the best chance to scale to the harder classical target. In the athermal sampling procedure of NS this is straightforward: the conventional termination criterion $\log\mathcal{Z}_\text{live} - \log\mathcal{Z} < -3$ targets $\beta = 1$, and reaching $\beta = 2$ (the standard convention given this energy definition) only requires lowering it further.

\section{IMH Results and Scaling Analysis}\label{sec:irmh}

The independent Metropolis--Hastings (IMH) kernel (\cref{sec:flow_mala}) provides an alternative to MALA that uses the flow density directly in the acceptance ratio. Concretely, given a flow $Q_\phi$ trained on the current live points, the IMH kernel proposes $x' \sim Q_\phi$ by sampling $x_0 \sim p_0$ and integrating the forward ODE $x' = \psi_1(x_0)$, recording $\log Q_\phi(x')$ via the change-of-variables formula \cref{eq:change_of_vars}. The proposal is accepted with probability
\begin{equation}\label{eq:mh_ratio}
    \alpha = \min\!\left(1,\; \frac{\pi(x') / Q_\phi(x')}{\pi(x) / Q_\phi(x)}\right) \cdot \mathbf{1}[E(x') < E^*],
\end{equation}
where the indicator enforces the energy constraint and the importance ratio corrects for the mismatch between the flow and the constrained prior. The kernel leaves $\pi_i^*$ invariant for any $Q_\phi$, so the flow's quality affects only mixing efficiency. This is conceptually similar to the importance-sampling construction in nessai~\citep{williams_nested_2021,williams_importance_2023}, which corrects the same flow/target mismatch via importance weights and applies a manual weight cut to tame heavy-tailed weights; our IMH formulation handles those tails naturally through MH rejection, without requiring such a cut.

Unlike MALA, IMH requires integrating the ODE both forward (to generate proposals) and backward (to compute the density $Q_\phi(x)$ via \cref{eq:change_of_vars}). We use an adaptive Dormand--Prince 5(4) solver (dopri5)~\citep{prince_high_1981} at tolerance $10^{-3}$ for these integrations. Two cheaper alternatives are worth flagging. Single-step coupling flows (e.g.\ RealNVP-style architectures) admit cheap exact densities but are limited in expressivity for the multimodal targets we consider. Stochastic trace estimators such as Hutchinson reduce the per-step divergence cost in expectation, but inject noise into the log-density and therefore into the MH acceptance, complicating reliable correction~\citep{grenioux_sampling_2023}; we use exact divergence throughout this analysis. While IMH achieves comparable sample quality to MALA on LJ-13, its computational cost scales poorly with dimension due to the backward ODE required for density evaluation.

For LJ-13, the IMH inner kernel achieves $r$-$W_2 = 0.011 \unc{0.006}$ and $E$-$W_2 = 0.77 \unc{0.19}$, comparable to MALA ($r$-$W_2 = 0.021 \unc{0.007}$, $E$-$W_2 = 0.65 \unc{0.18}$) and both within statistical error of the MCMC-vs-MCMC control. The IMH kernel maintains acceptance rates of 15--37\% over 51 NS iterations (mean 26\%), completing in 4.4 hours at 3.3 points/second (\cref{fig:irmh}).

However, applying IMH to LJ-55 (165 dimensions) yielded acceptance rates below 1\% with prohibitive runtime. The backward ODE integration requires computing $\nabla \cdot v_\phi$ along the trajectory, which involves $\mathcal{O}(d)$ vector-Jacobian products per integration step. At $d = 165$, this cost dominates and the density estimates become less accurate, leading to poor acceptance. This motivated the development of the MALA variant (\cref{sec:flow_mala}), which avoids density evaluation entirely by using only the flow velocity as a drift term.

These results suggest that exact importance correction via backward ODE is practical for moderate dimensions ($d \lesssim 50$) but does not scale to higher-dimensional systems where MALA-style approaches are preferred.

\begin{figure}[h]
    \centering
    \begin{subfigure}[t]{0.66\columnwidth}
        \centering
        \includegraphics[width=\linewidth]{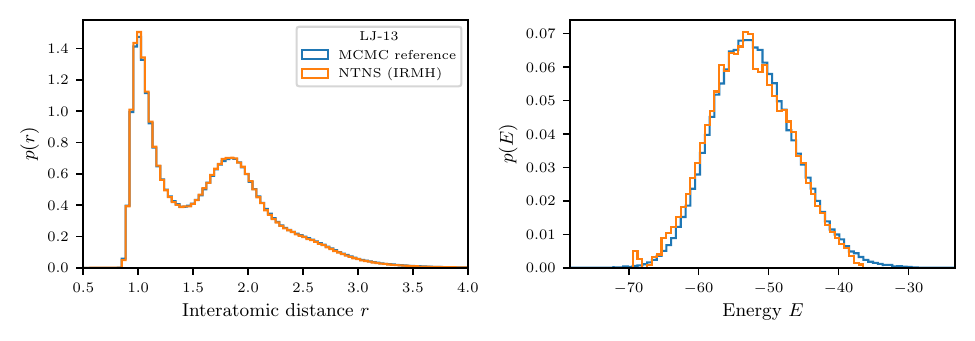}
        \caption{LJ-13 IMH posterior diagnostics showing interatomic distance and energy distributions compared to MCMC reference.}\label{fig:irmh:histograms}
    \end{subfigure}\hfill
    \begin{subfigure}[t]{0.33\columnwidth}
        \centering
        \includegraphics[width=\linewidth]{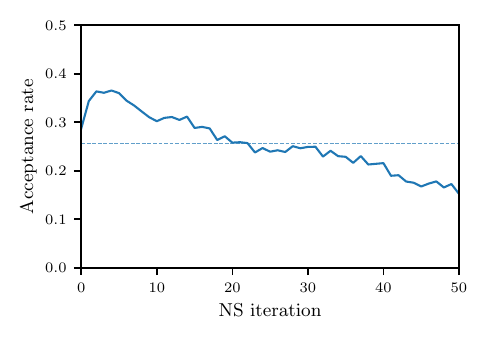}
        \caption{IMH acceptance rate across nested sampling iterations.}\label{fig:irmh:acceptance}
    \end{subfigure}
    \caption{IMH diagnostics on LJ-13. Left: pooled interatomic distance and energy histograms against the MCMC reference. Right: MH acceptance rate across nested sampling iterations. IMH attains comparable sample quality to MALA on LJ-13 but at substantially higher runtime.}\label{fig:irmh}
\end{figure}

\section{Nested Sampling vs SMC Paths}\label{sec:ns_vs_smc}

A common characteristic that differentiates sampling approaches is the path followed between the prior and posterior. A broad class of neural samplers construct a variational proposal using a neural density estimate; there is then a path that the ELBO minimisation follows~\citep{pmlr-v235-blessing24a}, which can succumb to mode-seeking behavior, and once trained the resulting model transports a sample from the known prior to the approximate posterior. This is where nested sampling clearly distinguishes itself: rather than following the noising path typical of denoising diffusion approaches, or the thermal path typified by annealing, nested sampling follows a unique path with useful properties.

Primarily, the nested sampling path is of interest for molecular modelling as it has unique advantages when sampling from distributions with \emph{phase transitions}~\citep{murray_nested_2005}. We illustrate an example continuous second-order phase transition in \Cref{fig:pt}. The nested sampling replacement kernel described in \cref{sec:ns_background} is characterized by a sorting operation on the particle cloud energy levels, which gives an automatic sequence of interpolating distributions. When approaching a flat plateau in energy as in the illustration, sorting remains a valid operation that steadily compresses the volume. This contrasts favourably with common bridging schemes such as annealing, which either experience weight collapse when trying to jump in temperatures as the critical temperature is approached, or critically slow down if the annealing step size is adapted.

The nested sampling path of distributions is unique, and provided there is an efficient way to draw samples from the constrained prior~\cref{eq:constrained_prior}, the algorithm retains its theoretical guarantees. Solving this sampling problem is where we employ flow matching models to improve efficiency.

\section{Supplementary Figures}\label{sec:supplementary_figures}

\begin{figure}[h]
  \centering
  \includegraphics[width=\columnwidth]{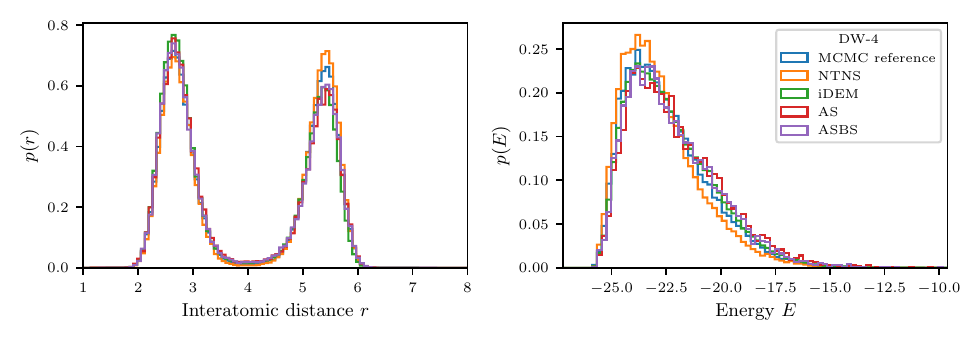}
  \caption{DW-4 posterior diagnostics comparing NTNS, iDEM, AS and ASBS with MCMC reference samples. Left: pooled interatomic distance distribution. Right: energy distribution.}\label{fig:dw4_diagnostics}
\end{figure}


\end{document}